\definecolor{newcolor}{rgb}{.8,.349,.1}
\journal{Pattern Recognition Letters}
\def\x{{\mathbf x}}
\def\y{{\mathbf y}}
\def\yi {y} 
\def \classzero {{1}}
\def \classone {{2}}
\def \classc {{c}}
\def\X{{\mathbf X}}
\def\mm{{\mathbf m}}
\def\X{{\mathbf X}}
\def\S{{\mathbf S}}
\def\L{{\cal L}}
\def\mub{{\boldsymbol{\mu}}}
\def\thb{{\boldsymbol{\theta}}}
\def\Sigmab{{\boldsymbol{\Sigma}}}
\def\R{{\mathbb{R}}}
\def \zb {{\mathbf z}}
\def \Wb {{\mathbf U}}
\def \avar {{\mathbf{w}}}
\def \bvar {{b}}
\def \prior {{p}}
\def \dimP {{P}}
\def \cbauc {{\textrm{CBAUC}}}
\DeclareMathOperator{\sgn}{sgn}
\DeclareMathOperator{\Tr}{Tr}
\newcommand{\jussi}[1]{{\color{black}{#1}}}
\newcommand{\heikki}[1]{{\color{black}{#1}}}
\newcommand{\sakira}[1]{{\color{black}{#1}}}
\newtheorem{definition}{Definition}
\newtheorem{proposition}{Proposition}
\newtheorem{lemma}{Lemma}
\begin{document}

\ifpreprint
  \setcounter{page}{1}
\else
  \setcounter{page}{1}
\fi

\begin{frontmatter}

\title{Bayesian Receiver Operating Characteristic Metric for Linear Classifiers}

\author[1]{Syeda Sakira \snm{Hassan}\corref{cor1}}
\cortext[cor1]{Corresponding author.}

\ead{sakira.hassan@gmail.com}
\author[1]{Heikki \snm{Huttunen}}
\author[1]{Jari \snm{Niemi}}
\author[2]{Jussi \snm{Tohka}}

\address[1]{Tampere University, Korkeakoulunkatu 1, Tampere, 33720, Finland}
\address[2]{\jussi{A.I. Virtanen Institute for Molecular Sciences,} University of Eastern Finland, Neulaniementie 2, 70211, Kuopio, Finland}


\begin{abstract}
We propose a novel classifier accuracy metric: the Bayesian Area Under the Receiver Operating Characteristic Curve (CBAUC). The method estimates the area under the ROC curve and is related to the recently proposed Bayesian Error Estimator. The metric can assess the quality of a classifier using only the training dataset without the need for computationally expensive cross-validation. We derive a closed-form solution of the proposed accuracy metric for any linear binary classifier under the Gaussianity assumption, and study the accuracy of the proposed estimator using simulated and real-world data. These experiments confirm that the closed-form CBAUC is both faster and more accurate than conventional AUC estimators.
\end{abstract}

\begin{keyword}
\KWD Receiver operating characteristic curve\sep \KWD Bayesian error estimation\sep \KWD classification.

\end{keyword}

\end{frontmatter}

\textit{Preprint submitted to Pattern Recognition letters 10 Oct, 2018}
\\
\textit{ Preprint accepted to Pattern Recognition letters 23 Jul, 2019}


\section{INTRODUCTION}
\label{sec:intro}
Supervised classification attempts to learn a model from a training set $\{(\x_1, \yi_1), \ldots (\x_N, \yi_N)\}$, $\x_k\in\R^P$, for predicting the class label of a new test sample. Besides the actual training of a classifier, the estimation of its accuracy is a critically important step, as it is the key tool for selection of the model structure and the related hyperparameters~\citep{alpaydin2014introduction, hastie2009elements, Duda01}.

The conventional approaches (cross-validation, bootstrap, bolstering) for error estimation are based on counting ~\citep{stone1974cross, dougherty2010performance, efron1992bootstrap, efron1983estimating, efron1997improvements, braga2004bolstered, Glick1978}. The most popular of these methods---cross-validation---splits the training data into distinct subsets for training and testing, and the number of erroneously predicted samples in the test partition is counted. Based on the test partition, it is also possible to calculate alternative accuracy measures, such as the Area Under the Receiver Operating Characteristics Curve (AUC). 

The cross-validation, in particular, has a number of drawbacks: The accuracy estimate depends on the particular split of the data\footnote{This can be alleviated at a considerable computational expense by re-running the cross-validation using several splits of the data.}, the approach is time consuming, and the resulting error estimate may have a large variance~\citep{dougherty2010performance}. In particular, the latter problem, which is shared by other counting based error estimates, has been documented already four decades ago~\citep{Glick1978}, but it is still often dismissed~\citep{dougherty2010performance}.

In addition to this, classification error rate may not be as efficient as AUC in quantifying the discrimination between classes~\citep{ferri2009experimental, jeni2013facing}. For example, classification error rate ignores imbalanced data and assumes data is equally distributed among classes.  
In binary classification, if one of the classes (minority class) is represented by a small number of observations compared to the other (majority class) class, then the dataset is said to be imbalanced~\citep{he2008learning}. The performance of the classifiers is often biased towards the majority class and this can misclassify the observations of minority class in real-world applications, such as fraud detection, diagnosis of rare diseases~\citep{cohen2006learning}. Moreover, in model selection, AUC is more appropriate choice to measure the ranking quality of a classifier~\citep{bradley1997use, NIPS2003_2518}. Thus, we are interested in finding a {deterministic}, {accurate} and {fast} approach for estimating the AUC.

Recently, a {Bayesian minimum mean-square estimator for classification error} was proposed  for binary classification~\citep{Dalton2011BayesianDiscrete,dalton2011bayesian} and extended to multiclass model selection~\citep{Huttunen2015BEE}. The Bayesian error estimator defines the problem in a Bayesian framework, and a closed-form expression can be derived for the posterior expectation of the classification error in the binary classification case for linear~\citep{dalton2011bayesian} and discrete classifiers~\citep{Dalton2011BayesianDiscrete}. The accuracy has been shown to be superior, especially in small sample settings as long as the prior assumptions of the underlying distribution are not severely violated.

The Bayesian Error Estimator is attractive because the error is estimated directly from the training set, and no iterative resampling or splitting operations are required. This results in a significant speedup, since the classifier is trained only once. For example, the 5-fold cross-validation includes five training iterations on partial data and one on all training data, while the Bayesian error estimator requires only a single training step with all data.

In this paper, we extend the Bayesian approach to estimating the AUC instead of the classification error, and derive a closed-form equation for evaluating this quantity. \jussi{We assume that the class conditional distribution are Gaussian and the true covariance matrix is the same for the two classes.} We abbreviate the the resulting estimator as CBAUC. The Bayesian estimation of AUC has been studied in~\citep{dalton2015ROC} by deriving optimal estimators for the true positive rate (TPR) and false positive rate (FPR), and iteratively sampling the TPR-FPR space. The proposed closed-form solution avoids the sampling step. In the following, we will call the approach in~\citep{dalton2015ROC} \textit{Empirical Bayesian AUC} (EBAUC).

%

\section{BAYESIAN AUC}
\label{sec:methods}
In this section, we will derive an analytic solution for the Bayesian Area Under the ROC Curve estimator. 
%
We consider a two-class classification problem with samples $\X = (\x_1,\x_2,\ldots, \x_N)$ and class labels $\y=(y_1, y_2,\ldots, y_N)$ with $\x_i \in\R^P$ and $y_i \in \{1,2\}$ for $i=1,\ldots, N$. We assume $P$-dimensional Gaussian densities with parameters $\mub_\classzero$, $\mub_\classone$, $\Sigmab$, where $\mub_\classzero\in\R^P$ and $\mub_\classone\in\R^P$ are the centers of the two Gaussian classes and $\Sigmab\in\R^{P\times P}$ is their common (invertible) covariance matrix. 
Finally, given the distribution parameters $\thb_\classzero = \{ \mub_\classzero, \Sigmab \}$ and $\thb_\classone = \{ \mub_\classone, \Sigmab \}$, we denote the densities for the samples 
as 
$f_{\thb_1}(\x_i)$ for the class $y_i=1$ and $f_{\thb_2}(\x_i)$ for $y_i=2$. 

In our Bayesian framework, we define a prior of the distribution parameters, $\prior(\thb_\classzero, \thb_\classone) = \prior(\Sigmab, \mub_\classzero, \mub_\classone) = \prior(\Sigmab) \, \prior(\mub_\classzero|\Sigmab) \, \prior(\mub_\classone|\Sigmab)$ (assuming independence). Then, the posterior density for the hyperparameters~\citep[Eq.~1]{dalton2011bayesian} becomes 

\begin{equation}
\prior^*(\thb_\classzero, \thb_\classone\mid \X, \y) \propto \prior(\thb_\classzero, \thb_\classone) \prod_{y_i=\classzero} f_{\thb_\classzero}(\x_i) \prod_{y_i=\classone} f_{\thb_\classone}(\x_i).
\label{eq:posterior}
\end{equation}
If the distribution parameters $\thb_1$ and $\thb_2$ were known, the population AUC for a linear classifier with weights $\avar\in \R^P$ could be written as~\citep[p.~1415]{demler2011equivalence}
\begin{equation}
\text{AUC}(\avar \mid \thb_1, \thb_2) = \Phi \left( \frac{\avar^T ( \mub_\classone - \mub_\classzero )}{\sqrt{2 \avar^T \Sigmab \avar}} \right).
\label{eq:auroc}
\end{equation}
Here, $\Phi$ is the Gaussian (cumulative) distribution function. With these preliminaries, we can state the definition of Bayesian AUC, which we call $\cbauc$ (Closed-form Bayesian AUC) to emphasize the difference to the EBAUC.
\begin{definition}
The \emph{Bayesian Area Under the Receiver Operating Characteristic Curve} of a linear classifier with weights $\avar\in \R^P$ classifying $P$-dimensional samples from two Gaussian distributions with parameters $\thb_\classzero = \{ \mub_\classzero, \Sigmab \}$ and $\thb_\classone = \{ \mub_\classone, \Sigmab \}$ is the posterior expectation of the AUC in Eq.~\ref{eq:auroc}: 
\begin{equation}
\begin{aligned}
& \emph{CBAUC}(\avar) \\
& = \int \, \emph{AUC}(\avar \mid \thb_1,\thb_2) \, \prior^*(\thb_\classzero, \thb_\classone \mid \X,\y) \, \mathrm{d}\thb_\classzero \mathrm{d}\thb_\classone.
\end{aligned}
\label{eq:bee_auc_1}
\end{equation}
\end{definition}
The above integral is similar to the one in the definition of Bayesian MMSE error estimator~\citep[Eq.~3]{dalton2011bayesian}, except now we compute the expectation of the AUC instead of the classification error. Next, we will solve the posterior densities for the parameters, \emph{i.e.,} $\prior^*(\thb_\classzero, \thb_\classone \mid \X, \y)$ in Eq. (\ref{eq:bee_auc_1}).

\subsection{Posterior Parameter Densities}
Following~\citep{dalton2011bayesian}, we assume normal-inverse-Wishart priors (with normal distribution for the mean and inverse-Wishart distribution for the covariances) of the form
\begin{align}
&\prior(\thb_\classzero, \thb_\classone)  = \prior(\Sigmab, \mub_\classzero, \mub_\classone) \nonumber \\ 
&= \prior(\Sigmab) \prior(\mub_\classzero \mid \Sigmab) \prior(\mub_\classone \mid \Sigmab) \nonumber \\
&\propto \det(\Sigmab)^{-(\kappa + \dimP + 1)/2} \exp \left(-\frac{1}{2}\Tr(\S\Sigmab^{-1})\right) \nonumber \\
&\times \det(\Sigmab)^{-1/2} \exp \left(-\frac{\nu_\classzero}{2}(\mub_\classzero - \mm_\classzero)^T\Sigmab^{-1}(\mub_\classzero - \mm_\classzero)\right) \nonumber \\
&\times \det(\Sigmab)^{-1/2} \exp \left(-\frac{\nu_\classone}{2}(\mub_\classone - \mm_\classone)^T\Sigmab^{-1}(\mub_\classone - \mm_\classone)\right). \nonumber
\end{align}
 The hyperparameters $\mm_{\classzero,\classone}\in\R^P$ and $\S\in\R^{P\times P}$
can be viewed as distribution centers for mean and covariance, respectively. Moreover,  $\nu_{\classzero, \classone}\in \R$, and $\kappa\in\R$ control the variability about these means. The intuitive meaning of these hyperparameters is that $\mm_{1,2}$ and $\S$ act as the most likely targets for the mean(s) and covariance of the distribution, while $\nu_{1,2}$ and $\kappa$ control how much the prior penalizes variability from $\mm_{1,2}$ and $\S$, respectively. In all our experiments, we set the parameters as $\mm_{\classzero,\classone}=\boldsymbol{0}$, $\S = {\bf I}$, $\nu_{1,2} = 0.5$ and $\kappa = P + 2$, as in~\citep{dalton2011bayesian}. 

\jussi{The normal-inverse-Wishart is the conjugate prior for this problem \cite{gelman2013bayesian}. The property that the posterior distribution follows the same parametric form as the prior distribution is called conjugacy; The conjugate family is mathematically convenient in that the posterior distribution follows a known parametric form \cite{gelman2013bayesian}.}
In order to obtain a closed-form solution and to ensure that prior distribution follows a normal inverse-Wishart distribution~\citep{gelman2013bayesian}, the values of the hyperparameters are required to be restricted. For more detailed discussion of the priors, we refer the reader to~\citep{dalton2011bayesian}.

Denote the sample means and covariances of class 1 and 2 as $\hat{\mub}_1, \hat{\mub}_2$ and $\hat{\Sigmab}_\classzero, \hat{\Sigmab}_\classone$, respectively. We can compute sample means and covariances from the data samples as $\hat{\mub}_j = \sum_i^{n_j} \x_i^j / n_j$ and $\hat{\Sigmab}_j = (1/(n_j - 1)) \sum_i^{n_j} (\x_i^j - \hat{\mub}_j) (\x_i^j - \hat{\mub}_j)^T$ for $j \in \{1,2\}$. Moreover, let $n_1$ and $n_2$ be the number of samples in the two classes.
Then, by substituting the prior into Eq.~(\ref{eq:posterior}), we obtain
\begin{align}
&\prior^*(\thb_\classzero, \thb_\classone) \propto \prior(\thb_\classzero, \thb_\classone) \nonumber \\
&\times \det(\Sigmab)^{-n_\classzero/2} \exp \bigg( -\frac{1}{2} \Tr
\left((n_\classzero - 1) \hat{\Sigmab}_\classzero \Sigmab^{-1}\right) \nonumber \\
&- {(n_\classzero/2)}(\mub_\classzero - \hat{\mub}_\classzero)^T \Sigmab^{-1}(\mub_\classzero - \hat{\mub}_\classzero) \bigg)  \nonumber \\
&\times \det(\Sigmab)^{-n_\classone/2} \exp\bigg( -\frac{1}{2} \Tr \left((n_\classone - 1) \hat{\Sigmab}_\classone\Sigmab^{-1}\right)  \nonumber  \\
&- {(n_\classone/2)}(\mub_\classone - \hat{\mub}_\classone)^T \Sigma^{-1}(\mub_\classone - \hat{\mub}_\classone) \bigg)  \nonumber \\
&\propto \det(\Sigmab)^{-(\kappa + n + \dimP + 1)/2} \nonumber \\  
&\times \exp\left( -\frac{1}{2} \Tr \left( \left( (n_\classzero - 1)\hat{\Sigmab}_\classzero +
(n_\classone - 1)\hat{\Sigmab}_\classone + S\right) \Sigmab^{-1}\right) \right) \nonumber \\
&\times \det(\Sigmab)^{-1/2} \exp\bigg( -\frac{1}{2}\Big( n_\classzero(\mub_\classzero - \hat{\mub}_\classzero)^T\Sigmab^{-1}(\mub_\classzero - \hat{\mub}_\classzero) \nonumber \\ 
&+ \nu_\classzero (\mub_\classzero - \mm_\classzero)^T \Sigmab^{-1} (\mub_\classzero - \mm_\classzero) \Big) \bigg) \nonumber \\
&\times \det(\Sigmab)^{-1/2} \exp\bigg( -\frac{1}{2}\Big( n_\classone(\mub_\classone - \hat{\mub}_\classone)^T\Sigmab^{-1}(\mub_\classone - \hat{\mub}_\classone) \nonumber \\
&+\nu_\classone (\mub_\classone - \mm_\classone)^T \Sigmab^{-1} (\mub_\classone - \mm_\classone) \Big) \bigg). \nonumber
\end{align}
\vspace*{-6pt}
By denoting the posterior hyperparameters as
\begin{align}
\kappa^* &= \kappa + n_\classzero + n_\classone = \kappa + n, \nonumber  \\ 
\nu_\classzero^* &= \nu_\classzero + n_\classzero, \, 
\nu_\classone^* = \nu_\classone + n_\classone,  \nonumber \\  
\S^* &= (n_\classzero - 1)\hat{\Sigmab}_\classzero + (n_\classone - 1)\hat{\Sigmab}_\classone + \S  \nonumber \\
& + \frac{n_\classzero \nu_\classzero}{n_\classzero + \nu_\classzero}(\hat{\mub}_\classzero - \mm_\classzero)(\hat{\mub}_\classzero - \mm_\classzero)^T \label{eq:KappaStar}\\
& + \frac{n_\classone \nu_\classone}{n_\classone + \nu_\classone}(\hat{\mub}_\classone - \mm_\classone)(\hat{\mub}_\classone - \mm_\classone)^T, \nonumber \\
\mm_\classzero^* &= \frac{\left( {n_\classzero \hat{\mub}_\classzero + \nu_\classzero \mm_\classzero} \right) }{ \left({n_\classzero + \nu_\classzero}\right)},
\mm_\classone^* = \frac{\left( {n_\classone \hat{\mub}_\classone + \nu_\classone \mm_\classone} \right) }{ \left({n_\classone + \nu_\classone}\right)},
\nonumber
\end{align}
we arrive at
\begin{equation}
\label{eq:posteriorParam}
\begin{aligned}
&\prior^*(\thb_\classzero, \thb_\classone) \propto \det(\Sigmab)^{-(\kappa^* + \dimP + 1)/2} \exp \left( -\frac{1}{2}\Tr\left(\S^*\Sigmab^{-1}\right)\right) \\
&\times \det(\Sigmab)^{-1/2} \exp \left(- \frac{\nu_\classzero^*}{2}(\mub_\classzero - \mm_\classzero^*)^T\Sigmab^{-1}(\mub_\classzero - \mm_\classzero^*)\right) \\
&\times \det(\Sigmab)^{-1/2} \exp \left(- \frac{\nu_\classone^*}{2}(\mub_\classone - \mm_\classone^*)^T\Sigmab^{-1}(\mub_\classone - \mm_\classone^*)\right) \\
&\propto \prior^*(\Sigmab) \prior^*(\mub_\classzero\mid \Sigmab) \prior^*(\mub_\classone\mid \Sigmab). \mbox{\qedhere}
\end{aligned}
\end{equation}

With the posterior parameter densities, we can now proceed to evaluate the expectation of the posterior of Eq. (\ref{eq:bee_auc_1}). \jussi{We note that the matrix $\S^*$ is positive-definite, since matrices $\hat{\Sigmab}_\classzero$ and $\hat{\Sigmab}_\classone$ are positive semi-definite and $\S = {\bf I}$. This ensures the correct definition of Inverse-Wishart distribution in the proof of Proposition 1. }

\subsection{Closed-Form Solution}
We can rewrite Eq.~(\ref{eq:bee_auc_1}) with the notation of Eq.~(\ref{eq:posteriorParam}) such that
\begin{equation}
\begin{aligned}
&\cbauc(\avar) = E[\text{AUC} \mid \avar] \\
&= \int \int_{\R^\dimP} \int_{\R^\dimP}\text{AUC}(\avar) \\
&\times 
\prior^*(\mub_\classzero|\Sigmab) \prior^*(\mub_\classone|\Sigmab) \prior^*(\Sigmab) \mathrm{d}\mub_\classzero \, \mathrm{d}\mub_\classone \mathrm{d}\Sigmab.
\end{aligned}
\label{eq:bee_auc_2}
\end{equation}
In order to evaluate the two inner integrals, we state the following lemma.

\begin{lemma}
\label{lemma:lemma1}
Assuming a fixed common covariance matrix $\Sigmab$ for the classes, the expected posterior $\emph{AUC}$ over class means $\mub_1$ and $\mub_2$ is given by
\begin{equation}
\label{eq:Lemma1}
\begin{aligned}
&\int_{\R^\dimP} \int_{\R^\dimP}\emph{AUC}(\avar) \prior^*(\mub_\classzero \mid \Sigmab) \prior^*(\mub_\classone \mid \Sigmab) \mathrm{d}\mub_\classzero \mathrm{d}\mub_\classone \\
&= \int_{\R^\dimP} \int_{\R^\dimP} \Phi \left( \frac{\avar^T ( \mub_\classone - \mub_\classzero )}{\sqrt{2 \avar^T \Sigmab \avar}} \right) \\
&\times f_{\{ \mm_\classzero^*, \Sigmab/\nu_\classzero^* \}}(\mub_\classzero) 
f_{\{ \mm_\classone^*, \Sigmab/\nu_\classone^* \}}(\mub_\classone) \mathrm{d}\mub_\classzero  \mathrm{d}\mub_\classone \\
&=  \Phi \left( \frac{\avar^T ( \mm_\classone^* - \mm_\classzero^* )}{\sqrt{2 \avar^T \Sigmab \avar}} \sqrt{\frac{2 \nu_\classzero^* \nu_\classone^*}{\nu_\classzero^* + \nu_\classone^* + 2 \nu_\classzero^*\nu_\classone^*}} \right).
\end{aligned}
\end{equation}
\end{lemma}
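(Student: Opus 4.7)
The plan is to reduce the double integral over $\mub_\classzero$ and $\mub_\classone$ to a one-dimensional expectation by exploiting that the integrand depends on the means only through the scalar linear combination $u := \avar^T(\mub_\classone - \mub_\classzero)$. Under the posterior densities established in Eq.~(\ref{eq:posteriorParam}), $\mub_\classzero$ and $\mub_\classone$ are conditionally independent Gaussians with means $\mm_\classzero^*, \mm_\classone^*$ and covariances $\Sigmab/\nu_\classzero^*, \Sigmab/\nu_\classone^*$, respectively. Hence $u$ is a univariate Gaussian with mean $\mu_u = \avar^T(\mm_\classone^* - \mm_\classzero^*)$ and variance
\begin{equation*}
\sigma_u^2 = \frac{\avar^T \Sigmab \avar}{\nu_\classzero^*} + \frac{\avar^T \Sigmab \avar}{\nu_\classone^*} = \frac{\nu_\classzero^* + \nu_\classone^*}{\nu_\classzero^*\nu_\classone^*}\,\avar^T \Sigmab \avar.
\end{equation*}

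The integral on the left-hand side is therefore
$\mathbb{E}\!\left[\Phi\!\left(c\,u\right)\right]$ with $c := 1/\sqrt{2\avar^T\Sigmab\avar}$. The key tool is the standard Gaussian-CDF smoothing identity: if $U\sim N(\mu_u,\sigma_u^2)$ and $W\sim N(0,1)$ is independent of $U$, then
\begin{equation*}
\mathbb{E}[\Phi(cU)] = \Pr(W \leq cU) = \Pr(W - cU \leq 0) = \Phi\!\left( \frac{c\mu_u}{\sqrt{1 + c^2\sigma_u^2}}\right),
\end{equation*}
which follows because $W-cU \sim N(-c\mu_u,\,1+c^2\sigma_u^2)$. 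I would cite this (or prove it inline via the probabilistic trick above) rather than performing the convolution integral by hand.

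Substituting $c = 1/\sqrt{2\avar^T\Sigmab\avar}$ and the expression for $\sigma_u^2$, I then need to simplify $c^2\sigma_u^2 = (\nu_\classzero^* + \nu_\classone^*)/(2\nu_\classzero^*\nu_\classone^*)$, so the denominator becomes
\begin{equation*}
\sqrt{1 + \frac{\nu_\classzero^* + \nu_\classone^*}{2\nu_\classzero^*\nu_\classone^*}} = \sqrt{\frac{2\nu_\classzero^*\nu_\classone^* + \nu_\classzero^* + \nu_\classone^*}{2\nu_\classzero^*\nu_\classone^*}}.
\end{equation*}
Pulling this factor into the argument of $\Phi$ yields exactly the right-hand side of Eq.~(\ref{eq:Lemma1}).

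The main obstacle is not in any single step but in being careful that the posterior marginals of $\mub_\classzero$ and $\mub_\classone$ (given $\Sigmab$) are indeed independent Gaussians with the stated covariances $\Sigmab/\nu_\classzero^*$ and $\Sigmab/\nu_\classone^*$; once this is read off from Eq.~(\ref{eq:posteriorParam}), the remaining work is the algebraic simplification and invoking the CDF-smoothing identity, both of which are routine.
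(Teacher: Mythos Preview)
Your proposal is correct and, in fact, takes a more direct route than the paper. The paper proceeds by writing out the Gaussian densities explicitly, factoring $\Sigmab = \Wb\Wb^T$, and performing the change of variables $\zb_c = \sqrt{\nu_c^*}\,\Wb^{-1}(\mub_c - \mm_c^*)$ to reduce to standard normals. It then expands $\Phi$ as an integral over an auxiliary scalar $x$, obtaining a $(2P+1)$-dimensional Gaussian integral that it reinterprets as the class-$1$ error of a synthetic linear classifier in $\R^{2P+1}$; the final answer is read off from the classifier-error formula \cite[Eq.~9]{dalton2011bayesian}. Your argument bypasses all of this: once you observe that the integrand depends on $(\mub_\classzero,\mub_\classone)$ only through the scalar $u = \avar^T(\mub_\classone - \mub_\classzero)$, the distribution of $u$ is immediate and the identity $\mathbb{E}[\Phi(cU)] = \Phi\!\left(c\mu_u/\sqrt{1+c^2\sigma_u^2}\right)$ finishes the job. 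The paper's detour through the $(2P+1)$-dimensional classifier is really just a roundabout way of reproving this same smoothing identity, so your approach is shorter and conceptually cleaner; the paper's version has the minor advantage of reusing machinery already developed in \cite{dalton2011bayesian}, keeping the presentation parallel to the Bayesian error-estimator derivation.
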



\begin{proof}
Let 
\begin{equation}
\label{eq:lemma1_leftHandSide}
\begin{aligned}
&M = \int_{\R^\dimP} \int_{\R^\dimP} \Phi \Big( \frac{\avar^T ( \mub_\classone - \mub_\classzero )}{\sqrt{2 \avar^T \Sigmab \avar}} \Big) \\
&\times f_{\{ \mm_\classzero^*, \Sigmab/\nu_\classzero^* \}}(\mub_\classzero) 
f_{\{ \mm_\classone^*, \Sigmab/\nu_\classone^* \}}(\mub_\classone) \mathrm{d}\mub_\classzero  \mathrm{d}\mub_\classone \\
\end{aligned}
\end{equation}
Given a fixed covariance matrix, the posterior density for the mean is Gaussian. Thus, we can write,
\begin{equation*}
\label{eq:lemma1_density_i}
\begin{aligned}
&f_{\{ \mm_\classc^*, \Sigmab/\nu_\classc^* \}}(\mub_\classc) = \frac{{\nu_\classc^*}^{\frac{\dimP}{2}}}{(2 \pi)^{\frac{\dimP}{2}} \det(\Sigmab)^{\frac{1}{2}} } \\
&\times \exp(-\frac{\nu_\classc^*}{2} (\mub_\classc - \mm_\classc^*)^T \Sigmab^{-1} (\mub_\classc - \mm_\classc^*) )
\end{aligned}
\end{equation*}
Replacing these expressions in Eq.(~\ref{eq:lemma1_leftHandSide}), we get
\begin{equation}
\label{eq:lemma1_leftHandSide_2}
\begin{aligned}
&M = \int_{\R^\dimP} \int_{\R^\dimP} \Phi \Big( \frac{\avar^T ( \mub_\classone - \mub_\classzero )}{\sqrt{2 \avar^T \Sigmab \avar}} \Big)\\ 
&\times 
\exp(-\frac{\nu_\classzero^*}{2} (\mub_\classzero - \mm_\classzero^*)^T \Sigmab^{-1} (\mub_\classzero - \mm_\classzero^*) )\\
&\times
\exp(-\frac{\nu_\classone^*}{2} (\mub_\classone - \mm_\classone^*)^T \Sigmab^{-1} (\mub_\classone - \mm_\classone^*) )\\
&\times 
\frac{{\nu_\classzero^*}^{\frac{\dimP}{2}}{\nu_\classone^*}^{\frac{\dimP}{2}}}{ (2 \pi)^\dimP \det(\Sigmab) } \mathrm{d}\mub_\classzero  \mathrm{d}\mub_\classone \\
\end{aligned}
\end{equation}

Since $\Sigmab$ is an invertible covariance matrix and by singular value decomposition, we can say, $\Sigmab = \Wb\Wb^T$ with $ \det(\Sigmab) = \det(\Wb)^2 $. Moreover, making changes of variables $ \zb_\classzero = \sqrt{\nu_\classzero^*}\Wb^{-1}(\mub_\classzero - \mm_\classzero^*)$ and $ \zb_\classone = \sqrt{\nu_\classone^*}\Wb^{-1}(\mub_\classone - \mm_\classone^*)$ we obtain,

\begin{equation}
\begin{aligned}
&M = \int_{\R^\dimP} \int_{\R^\dimP} \Phi \bigg( \frac{\avar^T ( \frac{1}{\sqrt{\nu_\classone^*}} \zb_\classone \Wb + \mm_\classone^* - \frac{1}{\sqrt{\nu_\classzero^*}} \zb_\classzero \Wb + \mm_\classzero^* )}{\sqrt{2 \avar^T \Sigmab \avar}} \bigg) \\
&\times \frac{1}{(2 \pi)^\dimP} \times \exp\Big( -\frac{\zb_\classzero^T \zb_\classzero + \zb_\classone^T \zb_\classone}{2} \Big) \mathrm{d}\zb_\classzero \mathrm{d}\zb_\classone \\
&=
\int_{\R^\dimP} \int_{\R^\dimP} \Phi \bigg( \frac{\frac{1}{\sqrt{\nu_\classone^*}} \avar^T \Wb \zb_\classone + \avar^T \mm_\classone^* - \frac{1}{\sqrt{\nu_\classzero^*}} \avar^T \Wb \zb_\classzero - \avar^T \mm_\classzero^*}{\sqrt{2 \avar^T \Sigmab \avar}} \bigg) \\
&\times \frac{1}{(2 \pi)^\dimP} \times \exp\Big( -\frac{\zb_\classzero^T \zb_\classzero + \zb_\classone^T \zb_\classone}{2} \Big) \mathrm{d}\zb_\classzero \mathrm{d}\zb_\classone.
\end{aligned}
\end{equation}

Let us define
\begin{equation}
\begin{aligned}
&\bar{\avar_\classzero} = \frac{\Wb^T \avar}{\sqrt{\nu_\classzero^*}\sqrt{2 \avar^T \Sigmab \avar}},  
&\bar{\avar_\classone} = \frac{\Wb^T \avar}{\sqrt{\nu_\classone^*}\sqrt{2 \avar^T \Sigmab \avar}}\\
&\bar{\bvar_\classzero} = \frac{\avar^T \mm_\classzero^*}{\sqrt{2 \avar^T \Sigmab \avar}}, 
&\bar{\bvar_\classone} = \frac{\avar^T \mm_\classone^*}{\sqrt{2 \avar^T \Sigmab \avar}}\\
&\|\bar{\avar_\classzero}\|^2 = \frac{1}{2 \nu_\classzero^*}, 
&\|\bar{\avar_\classone}\|^2 = \frac{1}{2 \nu_\classone^*}
\end{aligned}
\end{equation}
to obtain

\begin{equation}
\label{eq:M}
\begin{aligned}
&M = \int_{\R^\dimP} \int_{\R^\dimP} \Phi([\bar{\avar_\classone};-\bar{\avar_\classzero}]^T [\zb_\classone; \zb_\classzero] + (\bar{\bvar_\classone} - \bar{\bvar_\classzero})) \\
&\times \frac{1}{(2 \pi)^\dimP} \times \exp\Big( -\frac{[\zb_\classzero; \zb_\classone]^T [\zb_\classzero; \zb_\classone]}{2} \Big) \mathrm{d}\zb_\classzero \mathrm{d}\zb_\classone \\
&= \int_{\R^\dimP} \int_{\R^\dimP} \int_{R(x)} \frac{1}{(2 \pi)^{\dimP + \frac{1}{2}}} \\ 
&\times \exp \bigg( - \frac{x^2 + [\zb_\classzero;\zb_\classone]^T [\zb_\classzero;\zb_\classone]}{2}\bigg) \mathrm{d}x \mathrm{d}\zb_\classzero \mathrm{d}\zb_\classone,
\end{aligned}
\end{equation}
where $R(x) = \{ x | x < [\bar{\avar_\classone}; -\bar{\avar_\classzero}]^T [\zb_\classone;\zb_\classzero] + (\bar{\bvar_\classone}-\bar{\bvar_\classzero})\}$. Analogously to \cite[Appendix B]{dalton2011bayesian} we conclude that Eq. (\ref{eq:M}) gives the error of the $2\dimP + 1$ dimensional $2$-class classifier $\bar{g}([\zb_\classzero;\zb_\classone],x) = [\bar{\avar_\classone}; -\bar{\avar_\classzero}]^T [\zb_\classone;\zb_\classzero] - x + (\bar{\bvar_\classone}-\bar{\bvar_\classzero})$ originating from the $\classzero$ Gaussian class with zero mean and identity covariance. Thus, from \cite[Eq.~9]{dalton2011bayesian} it follows
\begin{equation*}
\begin{aligned}
&M = \Phi \left (\frac{\bar{\bvar_\classone} - \bar{\bvar_\classzero}}{\sqrt{\|[\bar{\avar_\classone};-\bar{\avar_\classzero}]\|^2 + 1}} \right) \\
&= \Phi \left( \frac{\avar^T(\mm_\classone^* - \mm_\classzero^*)}{\sqrt{2\avar^T\Sigmab \avar} \sqrt{\frac{1}{2\nu_\classone^*} + \frac{1}{2\nu_\classzero^*} + 1}} \right)\\
&= \Phi \left( \frac{\avar^T(\mm_\classone^* - \mm_\classzero^*)}{\sqrt{2\avar^T\Sigmab \avar}} \sqrt{\frac{2\nu_\classzero^*\nu_\classone^*}{\nu_\classzero^* + \nu_\classone^* + 2\nu_\classzero^*\nu_\classone^*}} \right).
\end{aligned}
\end{equation*}
Thus we have concluded the proof. 
\end{proof}

Substitution of the above result into Eq.~(\ref{eq:bee_auc_2}) yields
\begin{equation}
\begin{aligned}
&\cbauc(\avar) \\
& = \int \Phi \left( \frac{\avar^T ( \mm_\classone^* - \mm_\classzero^* )}{\sqrt{2 \avar^T \Sigmab \avar}} \sqrt{\frac{2 \nu_\classzero^* \nu_\classone^*}{\nu_\classzero^* + \nu_\classone^* + 2 \nu_\classzero^*\nu_\classone^*}} \right) \prior^*(\Sigmab) \mathrm{d}\Sigmab.
\end{aligned}
\label{eq:bee_auc_3}
\end{equation}

Next, we will state the main result of the paper solving this integral.

\begin{proposition}
Let $\psi(\cdot): \R^\dimP \mapsto \{\classzero, \classone\}$ denote a binary linear classifier parameterized by coefficients
$\avar\in\R^\dimP$ and intercept $b\in\R$. The Bayesian Area Under Curve estimate given training samples
${\X} = \{\x_1, \x_2,\ldots, \x_N\}$ assuming a common inverse-Wishart distributed covariance for both classes is then given as 
\begin{equation}
\begin{aligned}
&\normalfont\cbauc(\avar) = \\
& = \frac{1}{2} + \frac{\sgn({A^*})}{2} I \left(\frac{{A^*}^2}{{A^*}^2 + \avar^T\S^*\avar};\frac{1}{2},\frac{\kappa^* - \dimP + 1}{2}\right).
\end{aligned}
\end{equation}
with $I(x; a,b)$ the regularized incomplete beta function and
\[
A^* = \frac{\avar^T(\mm_\classone^* - \mm_\classzero^*) \sqrt{{\nu_\classzero^*\nu_\classone^*}}} {\sqrt{\nu_\classzero^* + \nu_\classone^* + 2 \nu_\classzero^* \nu_\classone^*}},
\]
and $\nu_1^*$, $\nu_2^*$, $\mm_1^*$, $\mm_2^*$, $\S^*$ and $\kappa^*$ as in Eq.~\ref{eq:KappaStar}.
\label{prop:proposition2}
\end{proposition}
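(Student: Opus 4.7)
The plan is to pick up from Eq.~(\ref{eq:bee_auc_3}), which after Lemma~\ref{lemma:lemma1} has already reduced $\cbauc(\avar)$ to the one-dimensional expectation
\[
\cbauc(\avar) = \int \Phi\!\left(\frac{A^*}{\sqrt{\avar^T \Sigmab \avar}}\right) \prior^*(\Sigmab)\, \mathrm{d}\Sigmab,
\]
since the prefactor multiplying $1/\sqrt{\avar^T \Sigmab \avar}$ in the argument of $\Phi$ collapses algebraically to exactly the $A^*$ stated in the proposition. From Eq.~(\ref{eq:posteriorParam}) the marginal $\prior^*(\Sigmab)$ is an inverse-Wishart density with scale matrix $\S^*$ and $\kappa^*$ degrees of freedom in the paper's parameterization, and crucially the integrand depends on $\Sigmab$ only through the scalar $\avar^T \Sigmab \avar$.

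To evaluate the remaining average, I would use the probabilistic representation $\Phi(t) = \Pr(Z \le t)$ for $Z \sim N(0,1)$ independent of $\Sigmab$, and exchange the order of expectations to obtain
\[
\cbauc(\avar) = \Pr\!\left( Z \sqrt{\avar^T \Sigmab \avar} \le A^* \right).
\]
I would then invoke the classical quadratic-form identity for inverse-Wishart matrices: if $\Sigmab$ is inverse-Wishart with scale $\S^*$ and $\kappa^*$ degrees of freedom, then $(\avar^T \S^* \avar)/(\avar^T \Sigmab \avar) \sim \chi^2_{\kappa^* - \dimP + 1}$, which follows from the standard Wishart-quadratic-form identity applied to $\Sigmab^{-1}$ together with the matrix-inversion/Schur-complement formula for the $(\avar,\avar)$-entry. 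Letting $U \sim \chi^2_{\kappa^* - \dimP + 1}$ independent of $Z$, the quantity $Z\sqrt{\avar^T \Sigmab \avar}/\sqrt{\avar^T \S^* \avar}$ equals $Z/\sqrt{U}$ in distribution, so $Z/\sqrt{U/(\kappa^*-\dimP+1)}$ is a Student's $t$ with $\kappa^*-\dimP+1$ degrees of freedom.

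Rearranging gives $\cbauc(\avar) = \Pr\!\bigl(T \le A^*\sqrt{(\kappa^*-\dimP+1)/(\avar^T \S^* \avar)}\bigr)$ with $T \sim t_{\kappa^*-\dimP+1}$, and I would finish by applying the standard representation of the Student's $t$ CDF through the regularized incomplete beta function,
\[
\Pr(T \le x) = \tfrac{1}{2} + \tfrac{\sgn(x)}{2}\, I\!\left(\frac{x^2}{x^2+n};\, \tfrac{1}{2},\, \tfrac{n}{2}\right),
\]
with $n = \kappa^*-\dimP+1$ and $x = A^*\sqrt{n/(\avar^T \S^* \avar)}$. A direct simplification yields $x^2/(x^2+n) = (A^*)^2/((A^*)^2 + \avar^T \S^* \avar)$ and $\sgn(x) = \sgn(A^*)$, which matches the stated formula exactly.

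The main obstacle I anticipate is pinning down the quadratic-form-of-inverse-Wishart identity under the paper's density convention (exponent $-(\kappa^*+\dimP+1)/2$), in particular fixing the degrees of freedom as $\kappa^*-\dimP+1$ rather than $\kappa^*$ or $\kappa^*-\dimP-1$; an off-by-one here would propagate into a wrong final answer. Every other step—the conditional-probability rewriting, identifying the scaled Student's $t$, and expressing its CDF as an incomplete beta—is routine bookkeeping.
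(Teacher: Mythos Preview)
Your proposal is correct and somewhat more self-contained than the paper's own proof. After the same reduction to $\int \Phi\bigl(A^*/\sqrt{\avar^T\Sigmab\avar}\bigr)\,\prior^*(\Sigmab)\,\mathrm{d}\Sigmab$ with $\prior^*(\Sigmab)$ inverse-Wishart, the paper does not carry out the integration itself but simply invokes Lemma~E.1 of \cite{dalton2011bayesian}, which evaluates precisely this expectation and returns the incomplete-beta expression in one step. Your route instead reproves the content of that lemma via the probabilistic representation $\Phi(t)=\Pr(Z\le t)$, the inverse-Wishart quadratic-form identity $(\avar^T\S^*\avar)/(\avar^T\Sigmab\avar)\sim\chi^2_{\kappa^*-\dimP+1}$, and the Student-$t$ CDF written through the regularized incomplete beta. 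The paper's argument is shorter because it outsources the work; yours is more transparent, in particular making explicit why the degrees of freedom come out as $\kappa^*-\dimP+1$, a fact hidden inside the cited lemma in the paper. Your flagged concern about the degrees-of-freedom bookkeeping is well placed but your answer is the right one under the paper's parameterization $\prior^*(\Sigmab)\propto\det(\Sigmab)^{-(\kappa^*+\dimP+1)/2}\exp\bigl(-\tfrac{1}{2}\Tr(\S^*\Sigmab^{-1})\bigr)$.
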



\begin{proof}
The posterior density $\prior^*(\Sigmab)$ has an inverse Wishart distribution, \textit{i.e.},
\begin{equation*}
\begin{aligned}
\prior^*(\Sigmab) = \text{InverseWishart}(\Sigmab;\S^*,\kappa^*)
\end{aligned}
\end{equation*}
with hyperparameters $\S^*$ and $\kappa^*$. If we take
\[
A^* = \frac{\avar^T(\mm_\classone^* - \mm_\classzero^*) \sqrt{{\nu_\classzero^*\nu_\classone^*}}} {\sqrt{\nu_\classzero^* + \nu_\classzero^* + 2 \nu_\classzero^* \nu_\classone^*}},
\]
then Eq.~(8) gets the form
\begin{equation}
\label{eq:bee_auc_5}
\cbauc(\avar)
=\int_{R(\Sigmab)} \Phi\left(\frac{A^*}{\sqrt{\avar^T\Sigmab \avar}}\right) \prior^*(\Sigmab) \, \mathrm{d}\Sigmab, 
\end{equation}
where $R(\Sigmab) = \{ \Sigmab \in \R^{\dimP \times \dimP} \mid \Sigmab \text{ positive definite}\}$. By applying~\citep[Lemma~E.1]{dalton2011bayesian}, we arrive at
\begin{equation}
\begin{aligned}
&\int_{R(\Sigmab)} \Phi\left(\frac{A^*}{\sqrt{\avar^T\Sigmab \avar}}\right) \prior^*(\Sigmab) \mathrm{d}\Sigmab  \\
& = \frac{1}{2} + \frac{\sgn({A^*})}{2} I \left(\frac{{A^*}^2}{{A^*}^2 + \avar^T\S^*\avar};\frac{1}{2},\frac{\kappa^* - \dimP + 1}{2}\right).
\end{aligned}
\label{prop:proposition2_sol}
\end{equation}
This concludes the proof. 
\end{proof}


A similar solution can also be derived for scaled identity covariances of the form $\Sigmab = \sigma {\bf I}$; see the supplementary material.

\section {EXPERIMENTAL RESULTS}
\label{sec:experiments}

In this section, we study the accuracy of the proposed estimator using synthetic and real-world data.
We consider three experimental cases with the following settings.

\textbf{\em Synthetic Data}---The first experiment involves a synthetic $\dimP$-dimensional  multivariate Gaussian distributed dataset. Samples for two classes are generated randomly from $\dimP$-dimensional normal probability distribution with a common covariance matrix $\Sigmab$ and means $\mub_1$ and $\mub_2$. We choose $\mub_\classzero = \mathbf{0}$ and $\mub_\classone = \mathbf{1}$ of dimension $\dimP = 4, 10 \text{ and } 100$ and a common covariance matrix $\Sigmab = \mathbf{I}$, where $\mathbf{I}$ is an identity matrix of $\dimP \times \dimP$ dimension. 

\textbf{\em High Dimensionality: $P\gg N$}---For the next experiment, we use the preprocessed ovarian cancer data set from the FDA-NCI Clinical Proteomics Program Databank. The dataset contains $216$ observations with $4000$ features, where the observations are divided into two classes: $121$ patients are in cancer disease group and $95$ patients are in healthy group. The task is to distinguish between cancer versus healthy patients from this dataset. \jussi{We note that we assume that the true covariance $\Sigmab$ is full-rank, but the sample covariance matrices can be singular as they need not to be inverted and the matrix $\S^*$ is guaranteed to be non-singular.}

\textbf{\em Moderate Dimensionality: $P\approx N$}---The third experiment uses magnetoencephalography (MEG) mind reading challenge dataset of ICANN 2011 conference\footnotemark. The task was to predict the type of video shown to the test subject based on the MEG brain measurements. Although this was a 5-class problem, the organizers considered a binary task: separating videos with a plot from those without one. The approach winning the competition~\citep{huttunen2012MVAP} extracted features with 408 dimensions. We use these features in this work.  
\footnotetext{\url{http://www.cis.hut.fi/icann2011/mindreading.php}}

\subsection{Estimator Accuracy Evaluation}
\label{subsec:acc_assessment}

\begin{figure*}[htb]
\begin{center}
\subfigure[]{\includegraphics[width=0.25\textwidth]{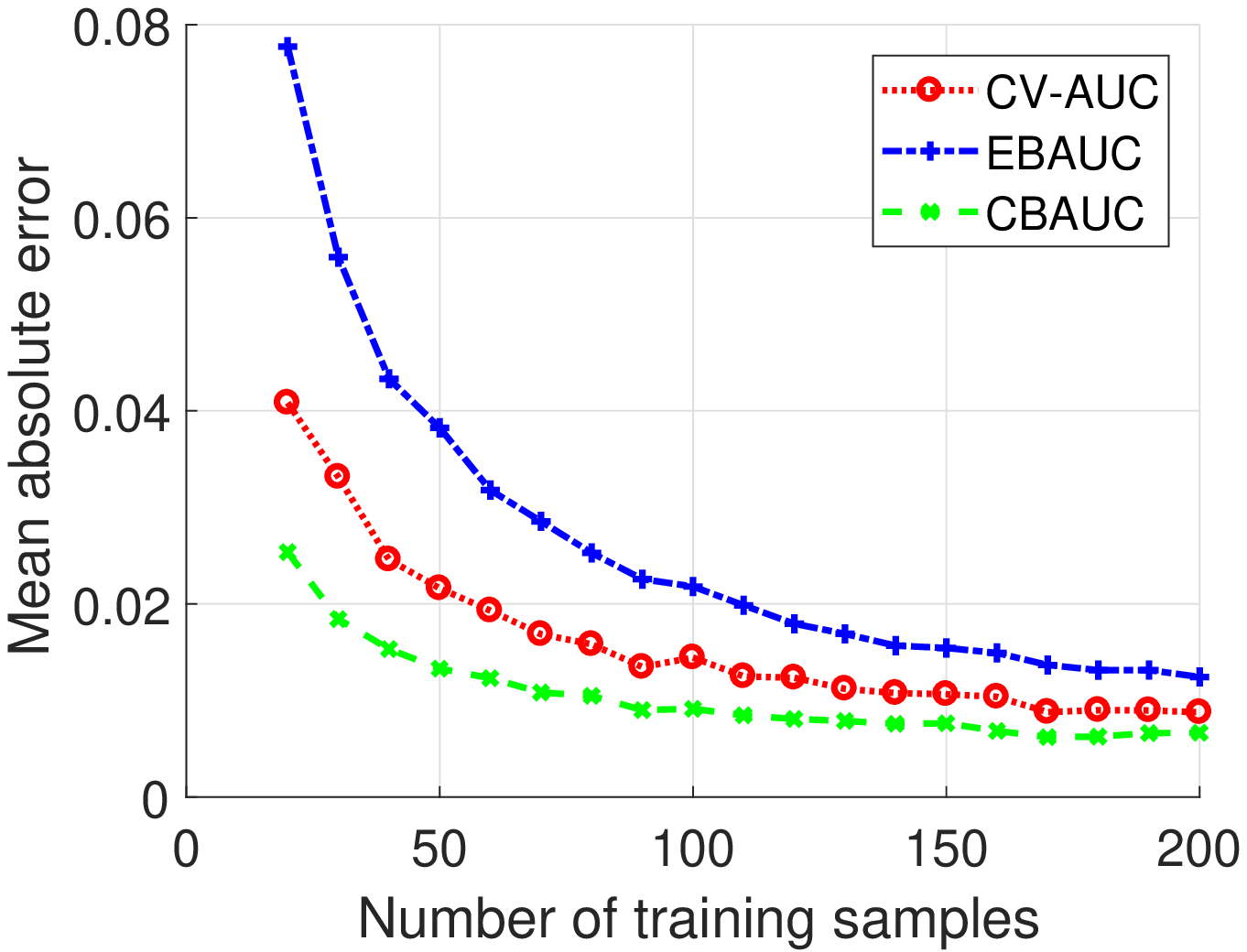}\label{fig:acc_diff_synthetic_case_4D}
}
\subfigure[]{\includegraphics[width=0.25\textwidth]{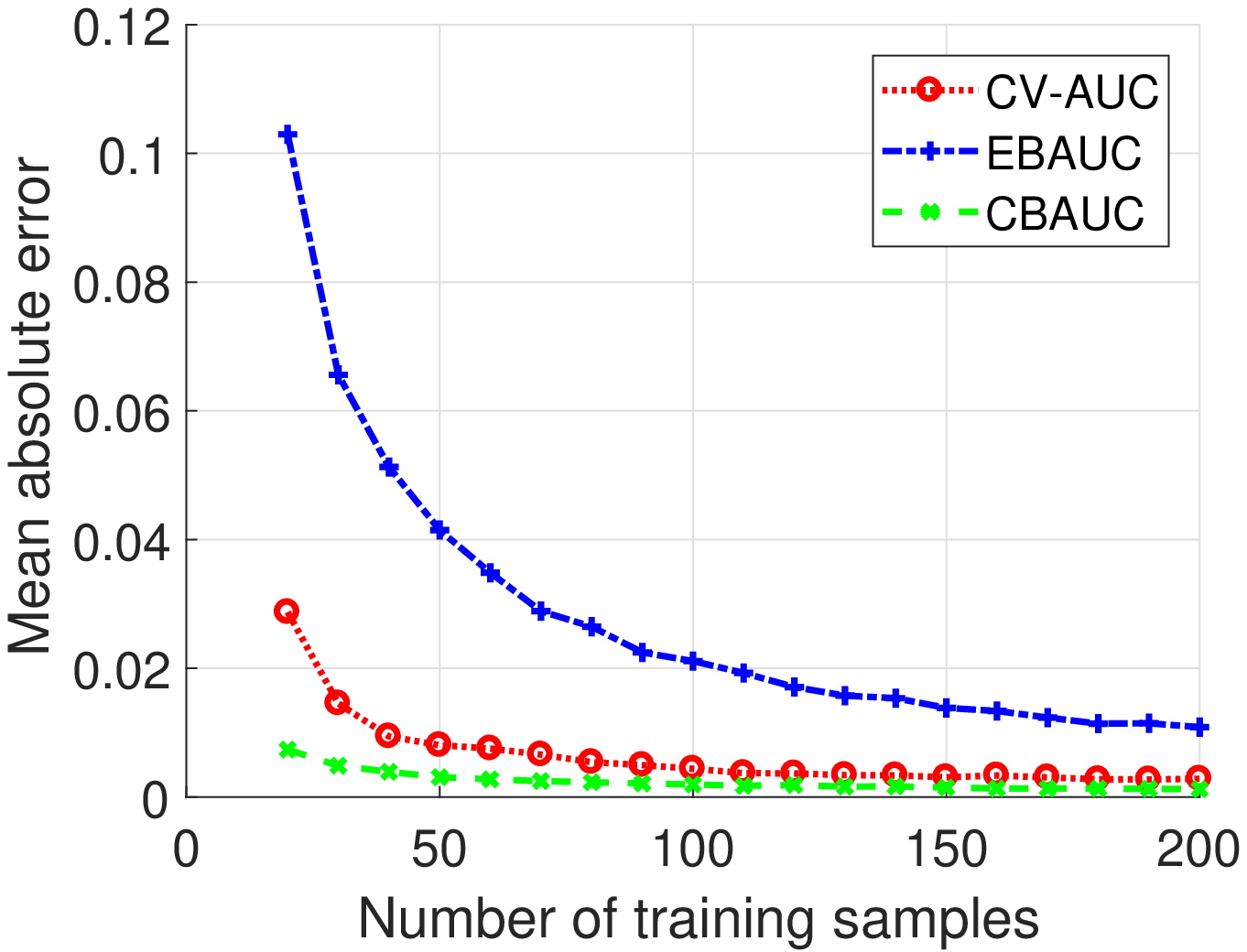}\label{fig:acc_diff_synthetic_case_10D}}
\subfigure[]{\includegraphics[width=0.25\textwidth]{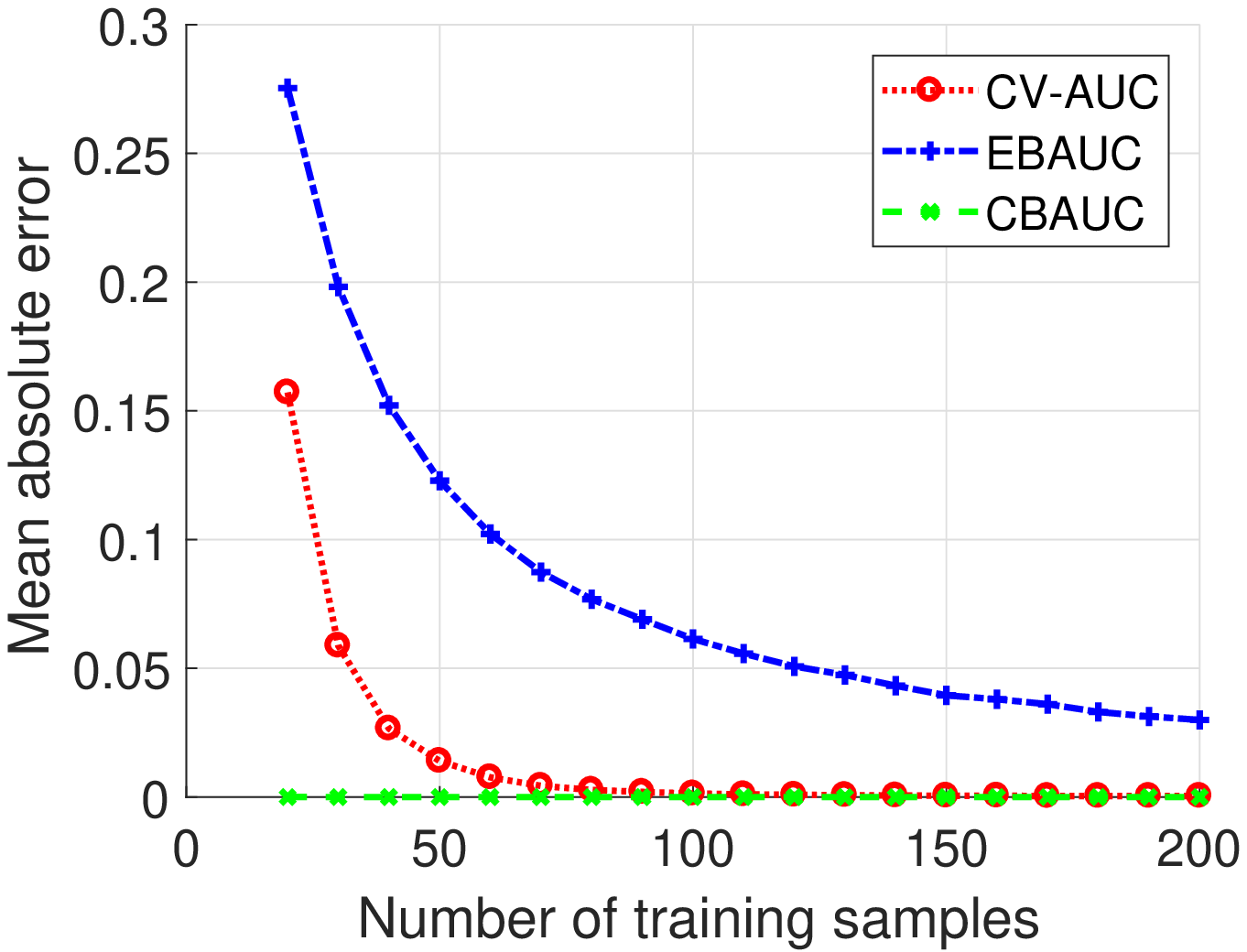}\label{fig:acc_diff_synthetic_case_100D}} \\

\subfigure[]{\includegraphics[width=0.25\textwidth]{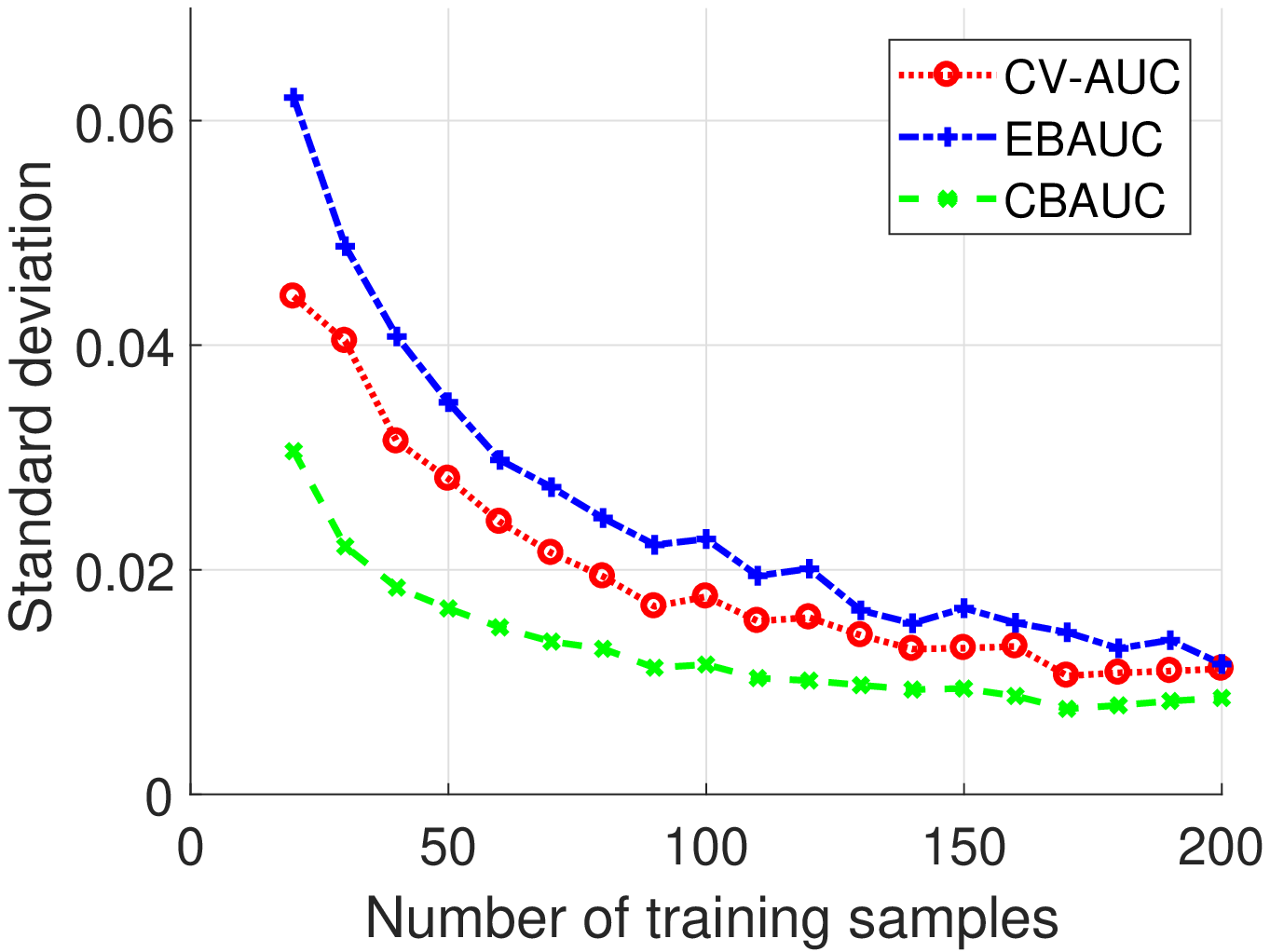}\label{fig:acc_std_4D}}
\subfigure[]{\includegraphics[width=0.25\textwidth]{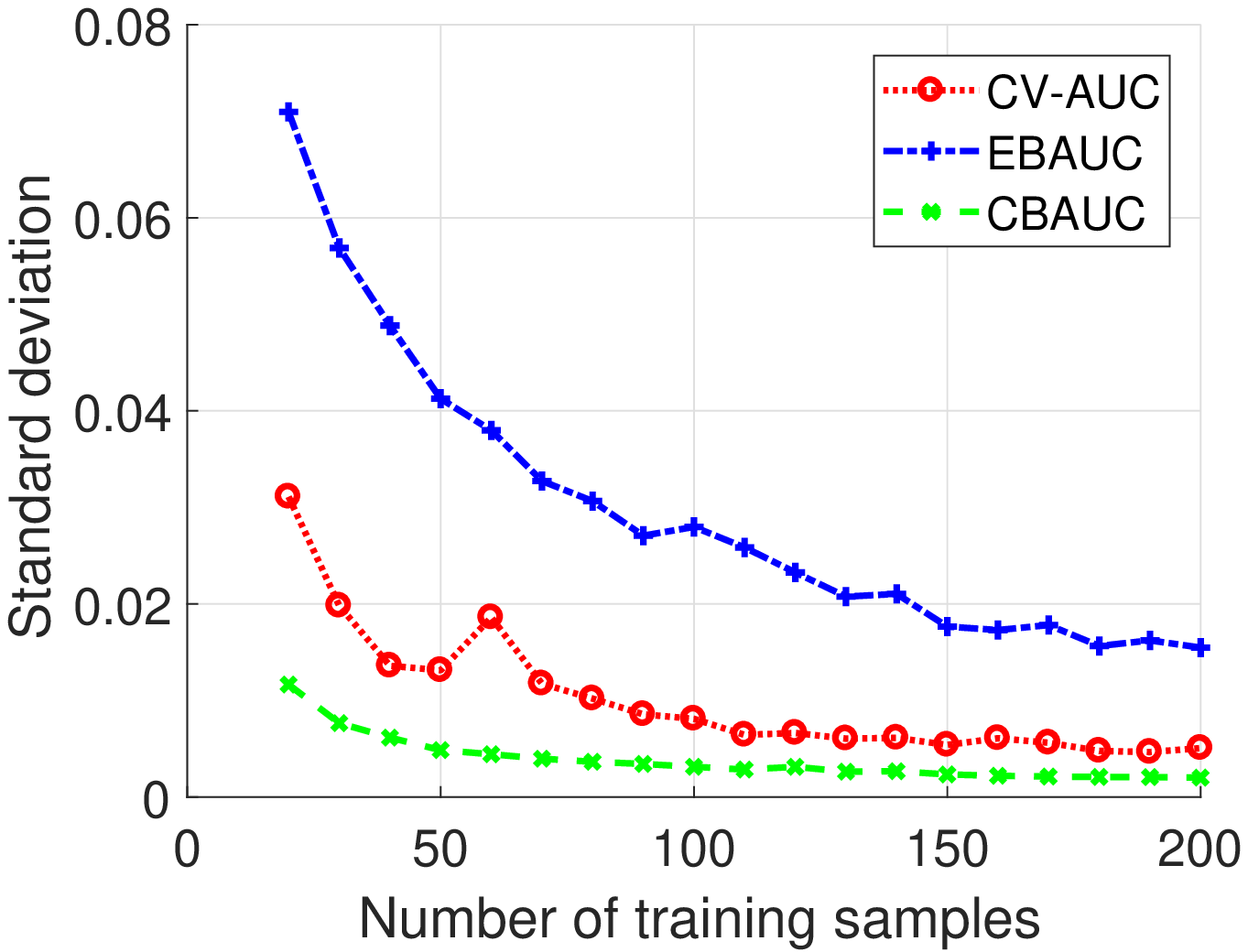}\label{fig:acc_std_10D}}
\subfigure[]{\includegraphics[width=0.25\textwidth]{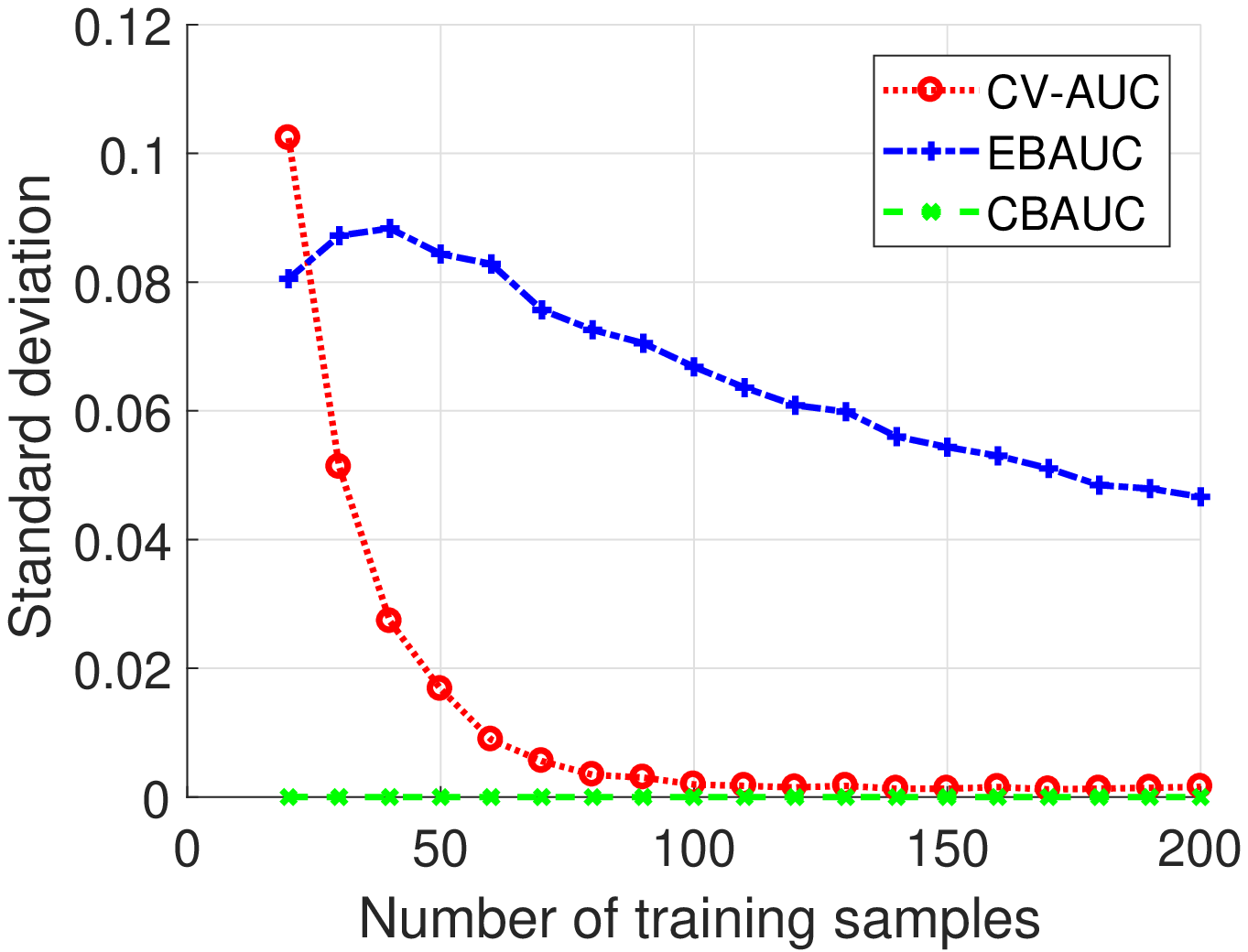}\label{fig:acc_std_100D}}

\caption{Accuracy assessment using multivariate Gaussian data. Mean absolute error (MAE) of three AUC estimators to the true AUC with dimensionality (a) P = 4, (b) P = 10, and (c) P = 100. The AUC estimators are 5-fold cross-validation AUC (CV-AUC) (red curve), empirical Bayesian AUC (EBAUC) (blue curve) and proposed closed-form Bayesian AUC (CBAUC) (green curve). Panels (d) P = 4, (e) P = 10, and (f) P = 100 show the standard deviations of the erros of AUC estimates.}
\label{fig:synthetic_acc}
\end{center}
\end{figure*}

\begin{figure}[htb]
\begin{center}
\subfigure[]{\includegraphics[width=4cm]{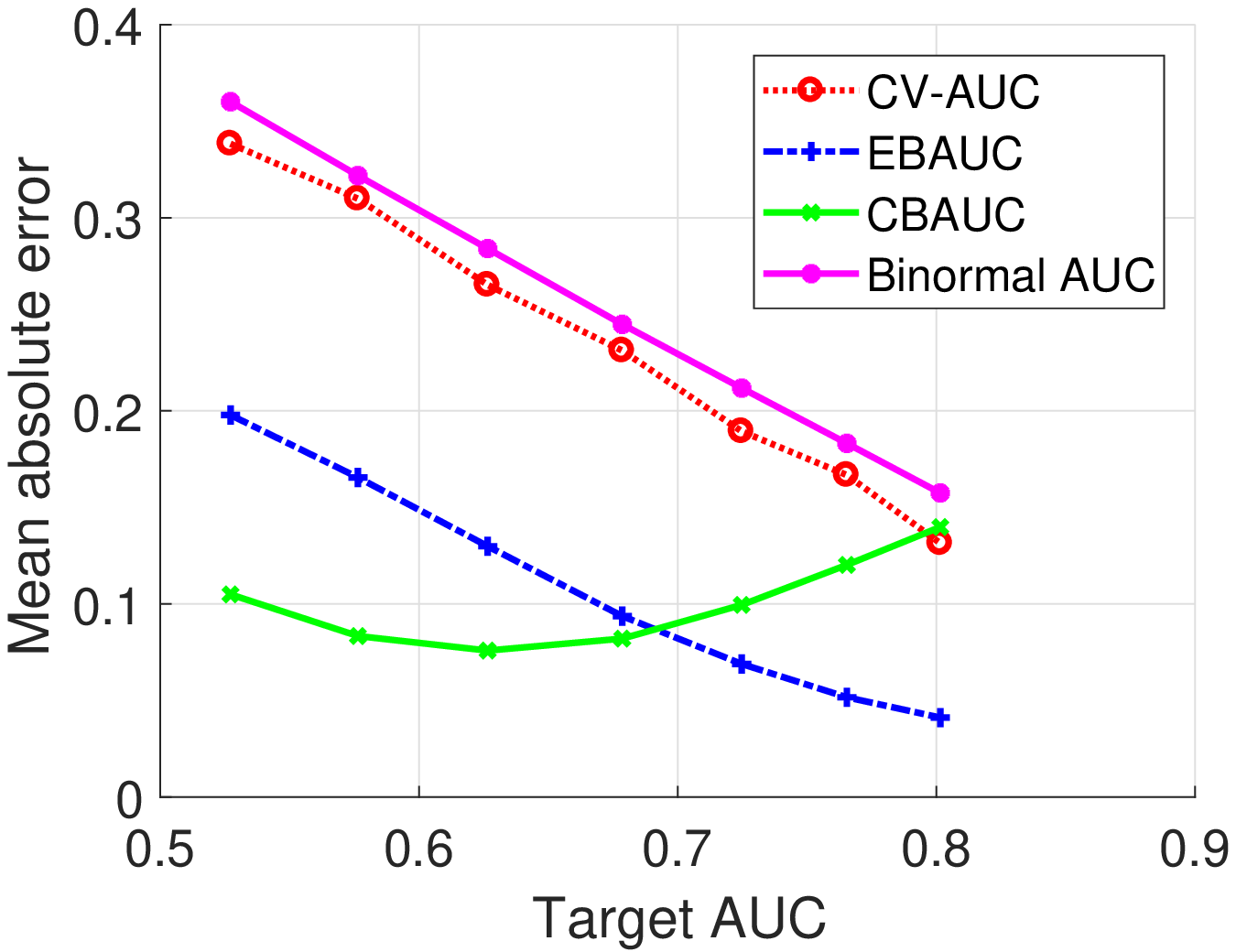}}
\subfigure[]{\includegraphics[width=4cm]{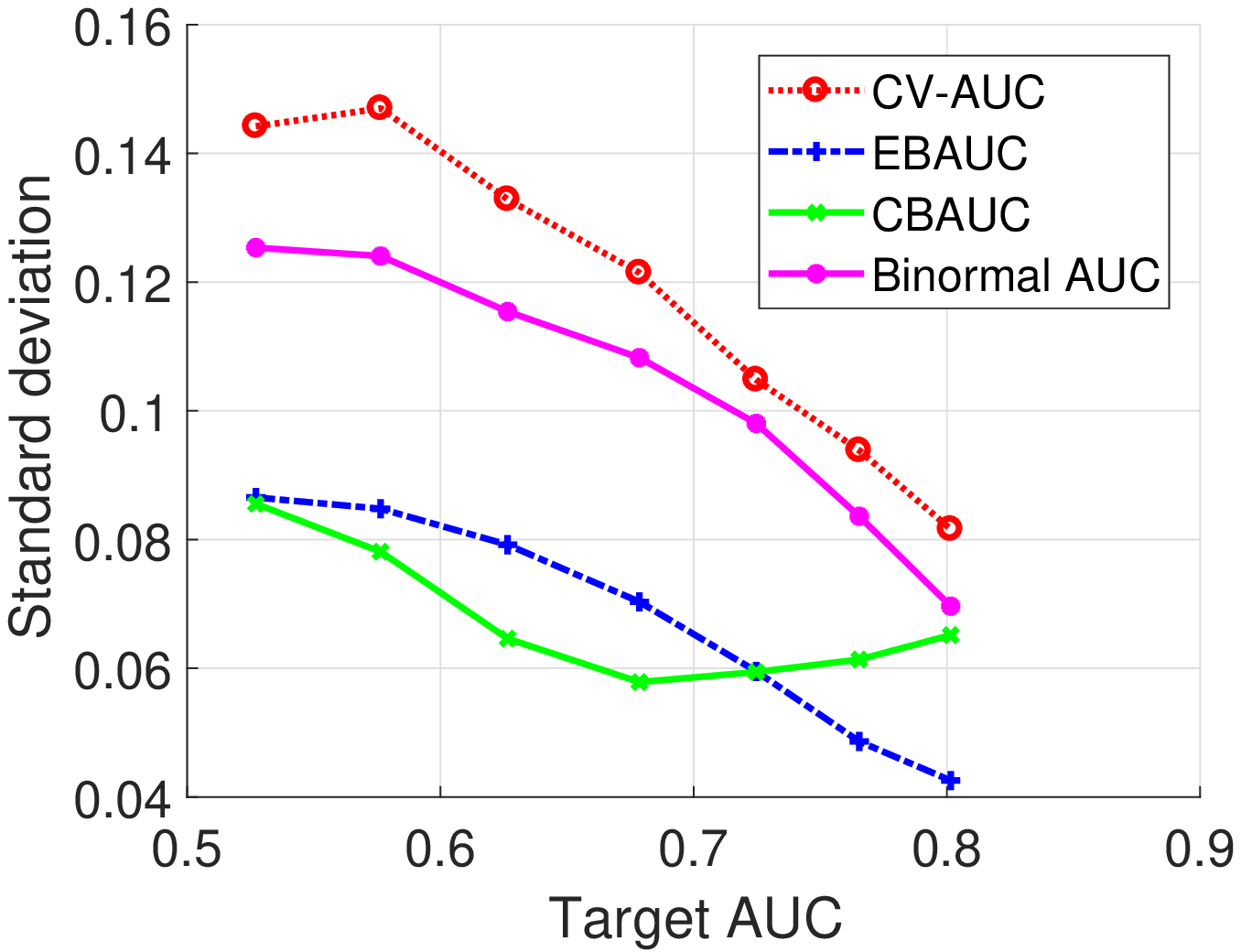}} 
\caption{Accuracy assessment with respect to the target AUC. (a) The mean absolute error of different estimators with respect to the target AUC. (b) The standard deviation of errors of different estimators with respect to the true AUC. The AUC estimators are 5-fold cross-validation AUC (CV-AUC) (red curve), empirical Bayesian AUC (EBAUC) (blue curve), proposed closed-form Bayesian AUC (CBAUC) (green curve), and sample-binormal AUC (pink curve). }
\label{fig:target_auc}
\end{center}
\end{figure}

\begin{figure}[htb]
\begin{center}
{\includegraphics[width=4cm]{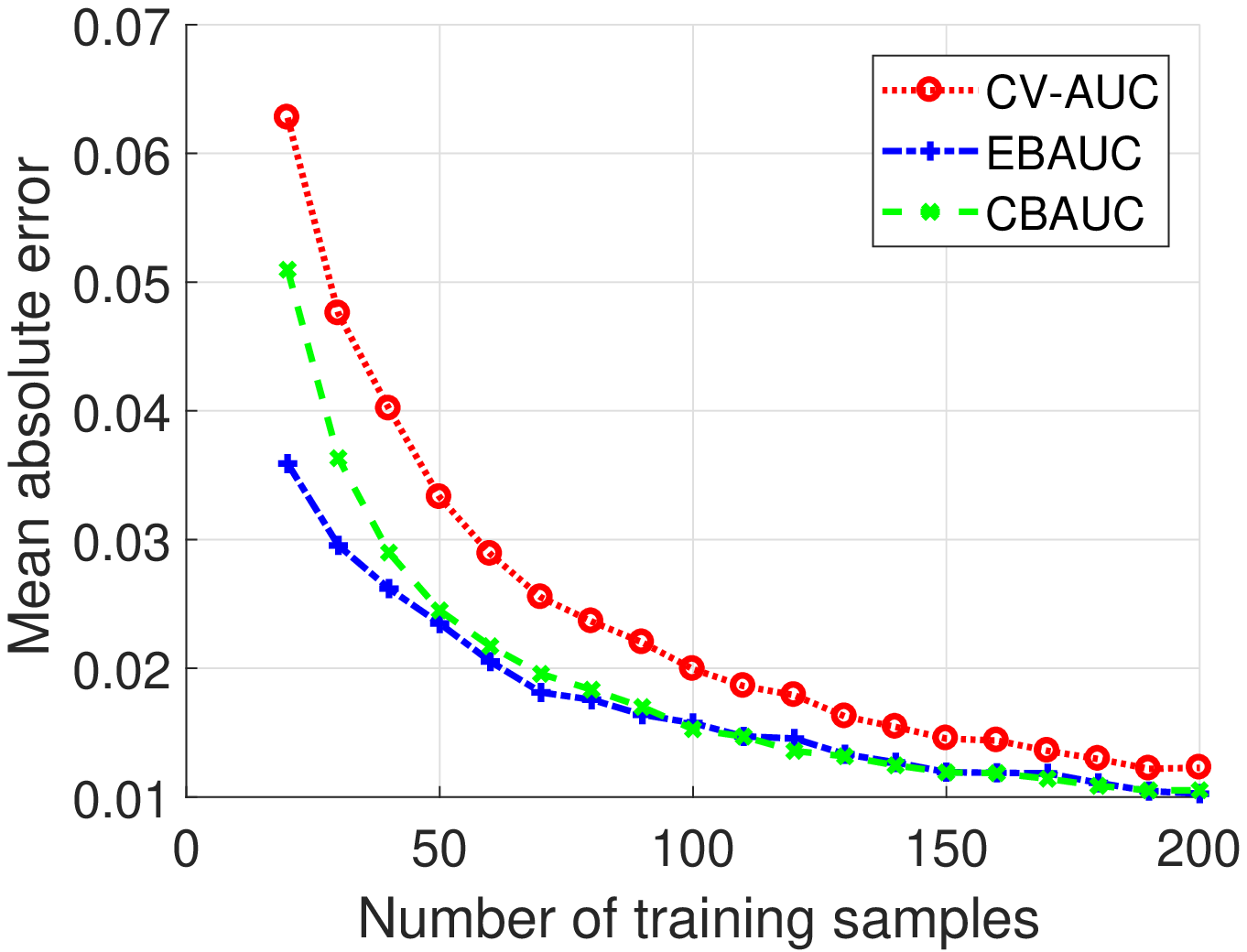}
}
\caption{Accuracy assessment using multivariate Gaussian data when {$\Sigma_\classzero \neq \Sigma_\classone$}. Mean absolute error (MAE) of three AUC estimators to the true AUC. The error estimators are 5-fold cross-validation AUC (CV-AUC) (red curve), empirical Bayesian AUC (EBAUC) (blue curve) and proposed closed-form Bayesian AUC (CBAUC) (green curve). }
\label{fig:synthetic_acc_with_diff_cov}
\end{center}
\end{figure}

\begin{figure*}[htb]
\begin{center}
\subfigure[]{\includegraphics[width=0.3\textwidth]{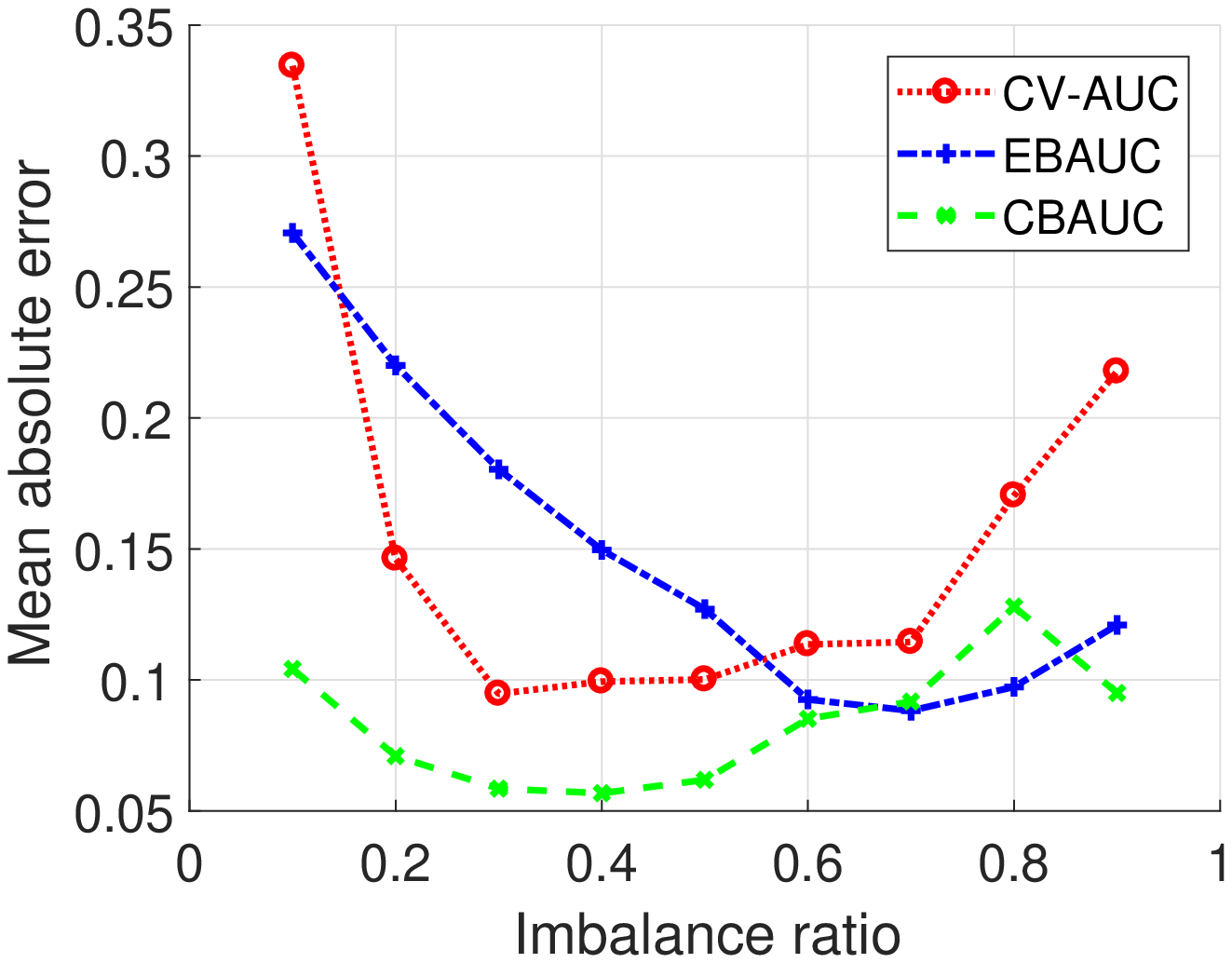}\label{fig:im_diff_synthetic_case_2D_N_10}
}
\subfigure[]{\includegraphics[width=0.3\textwidth]{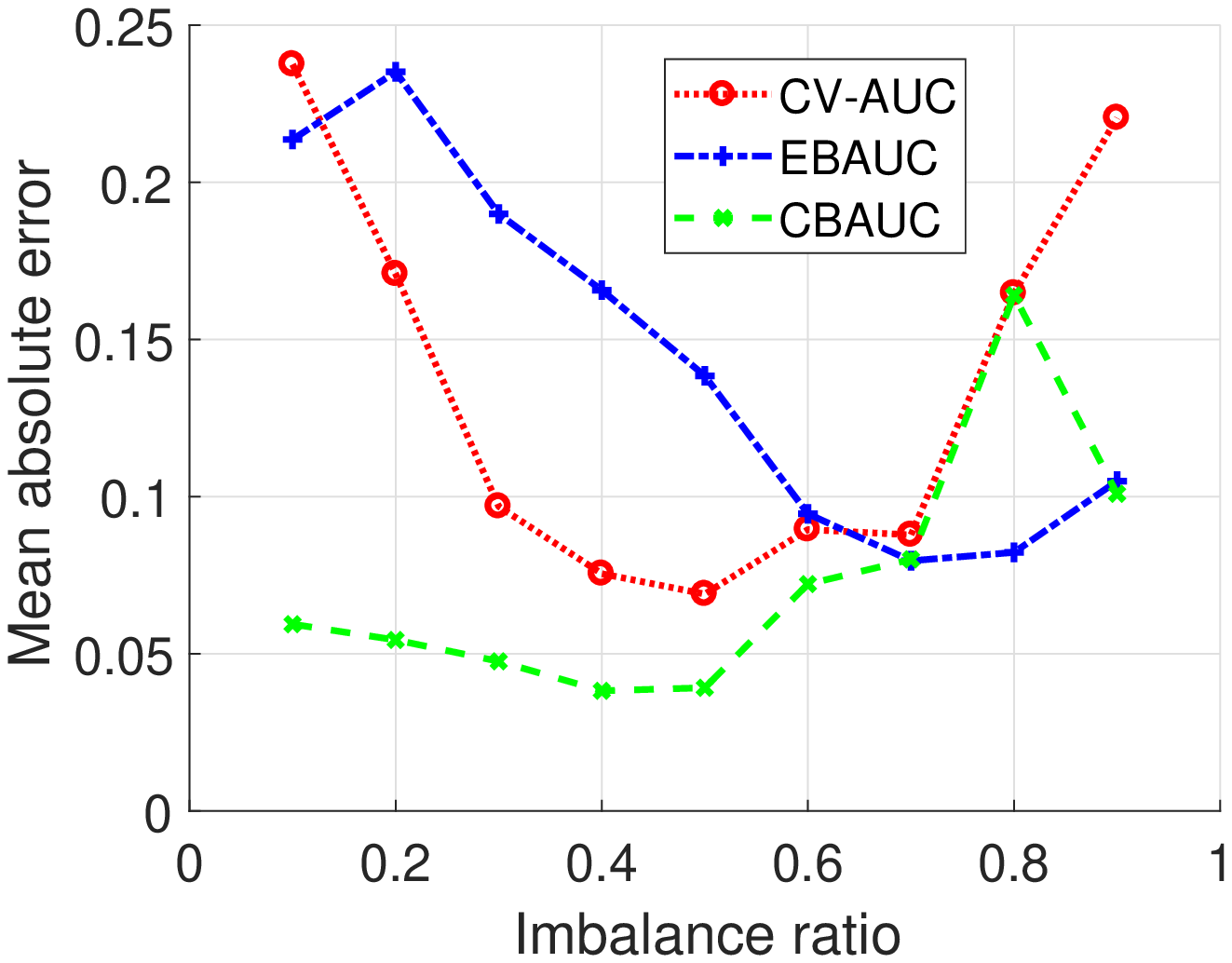}\label{fig:im_diff_synthetic_case_4D_N_10}
}
\subfigure[]{\includegraphics[width=0.3\textwidth]{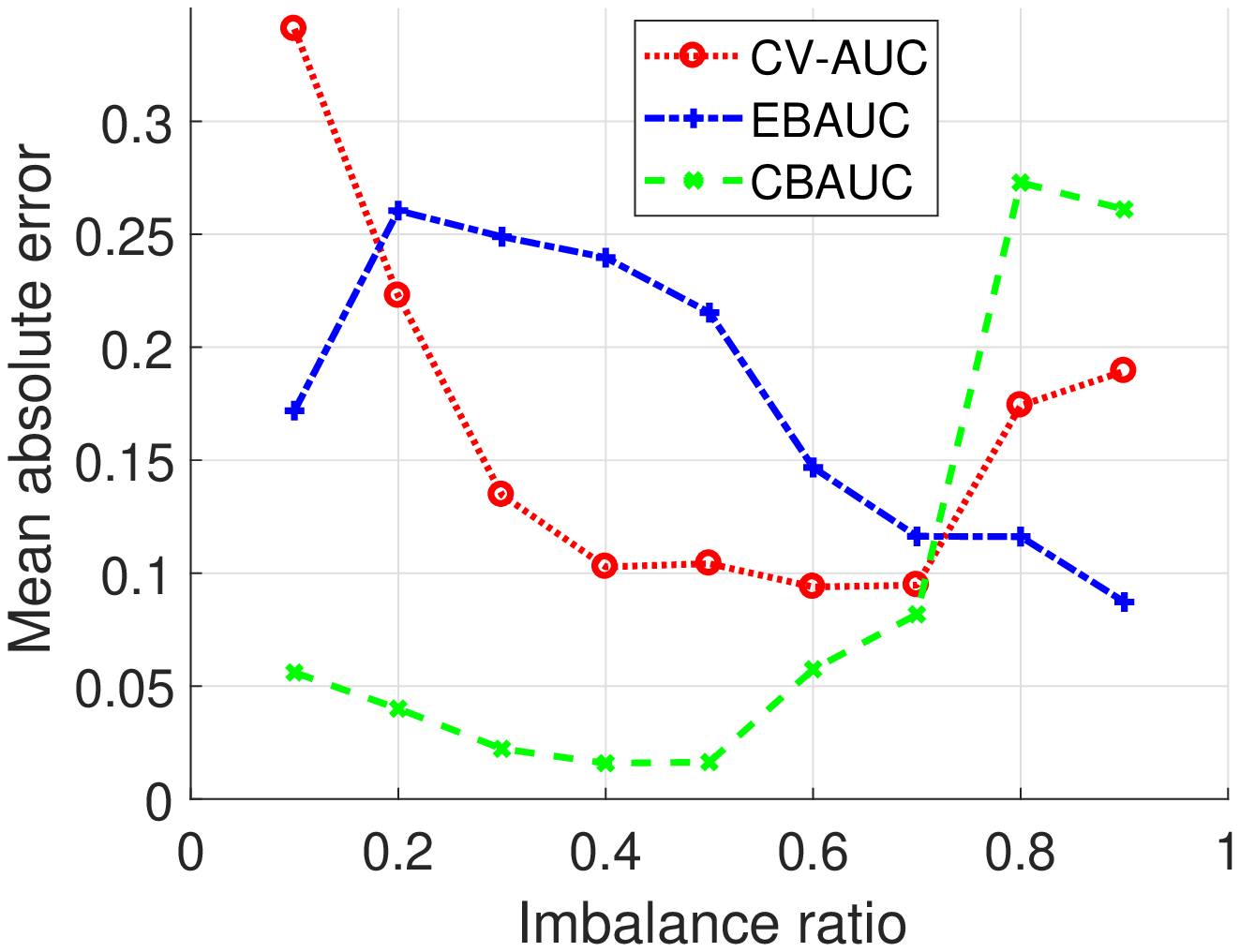}\label{fig:im_diff_synthetic_case_10D_N_10}
}
\\
\subfigure[]{\includegraphics[width=0.3\textwidth]{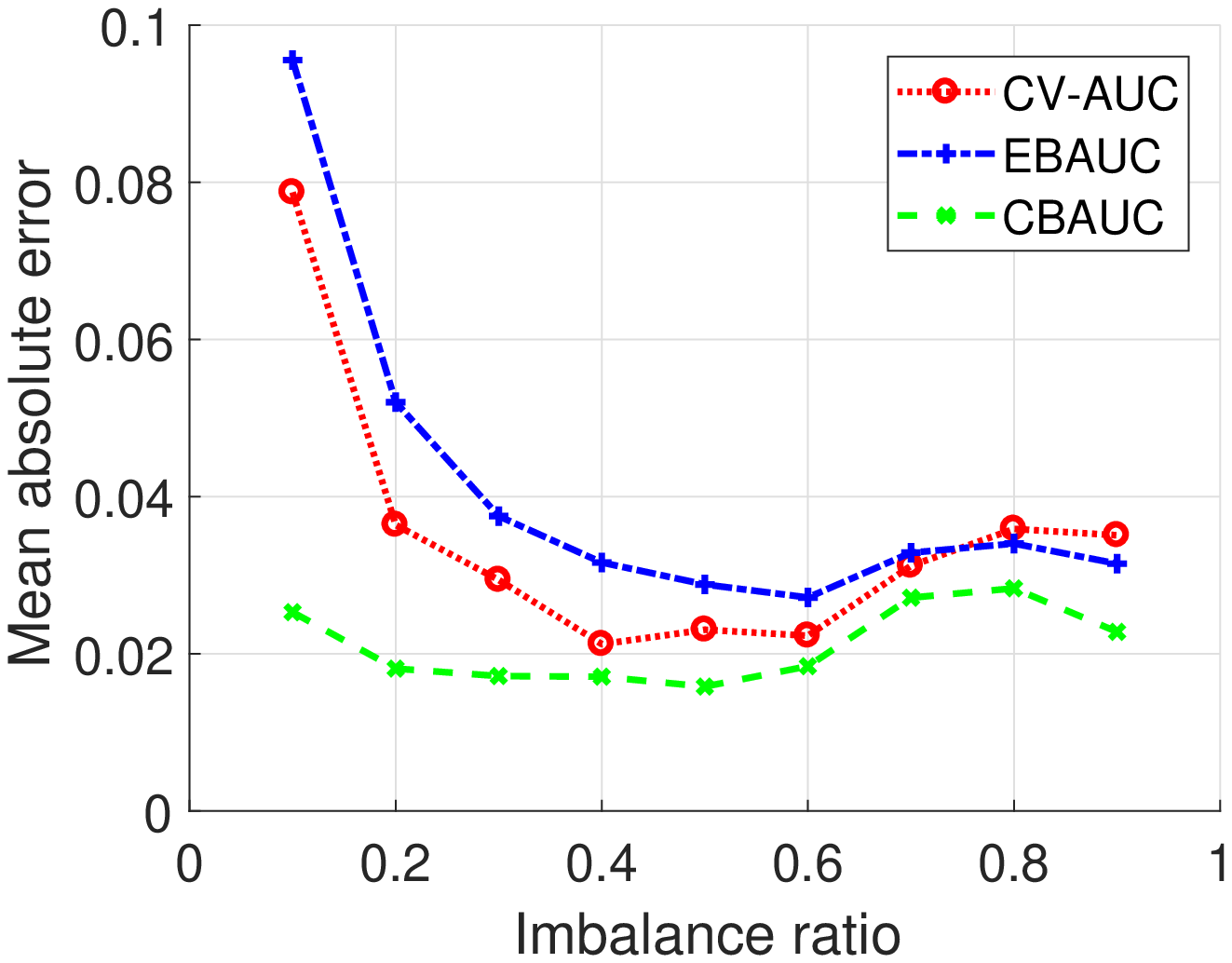}\label{fig:im_diff_synthetic_case_2D_N_100}
}
\subfigure[]{\includegraphics[width=0.3\textwidth]{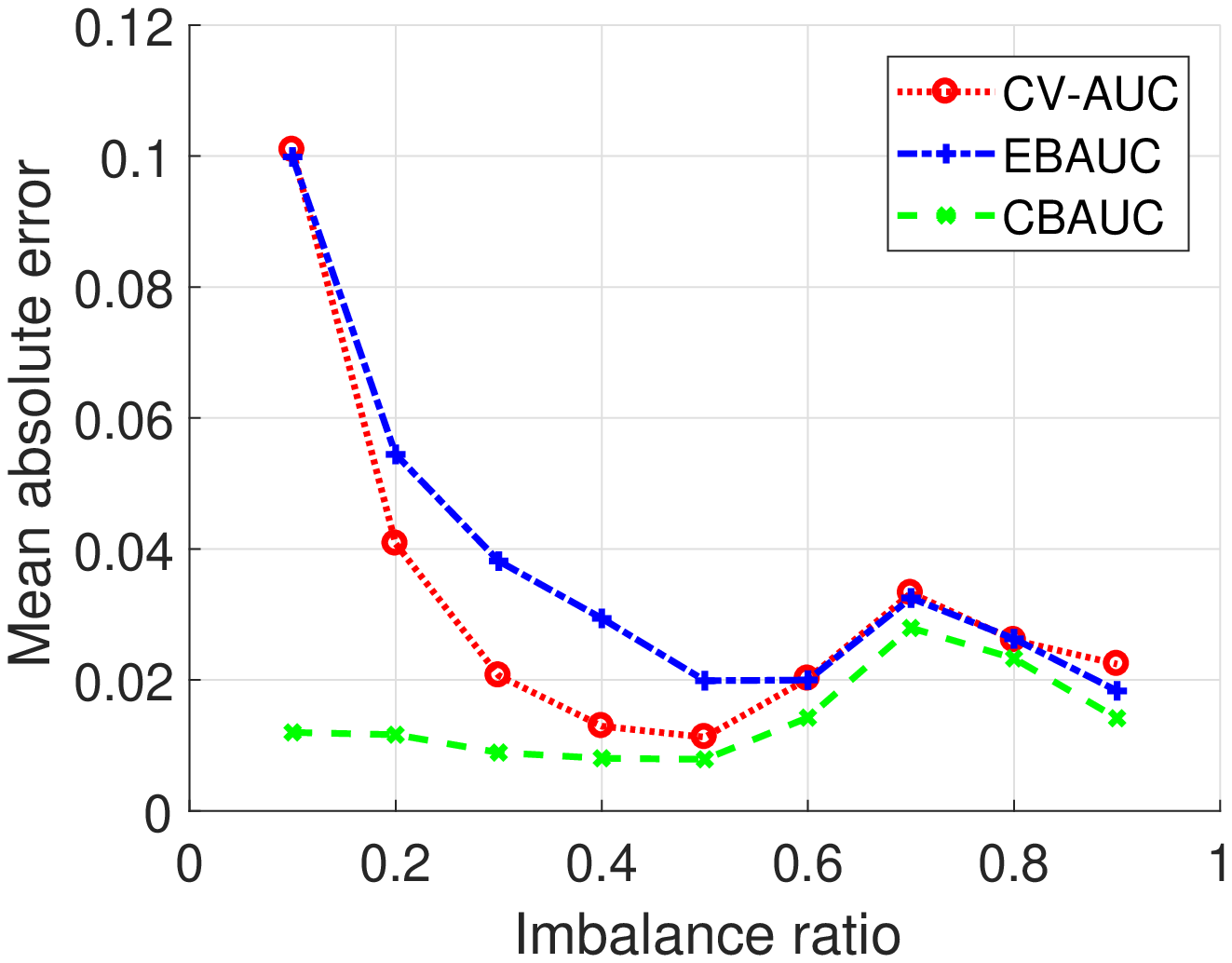}\label{fig:im_diff_synthetic_case_4D_N_100}
}
\subfigure[]{\includegraphics[width=0.3\textwidth]{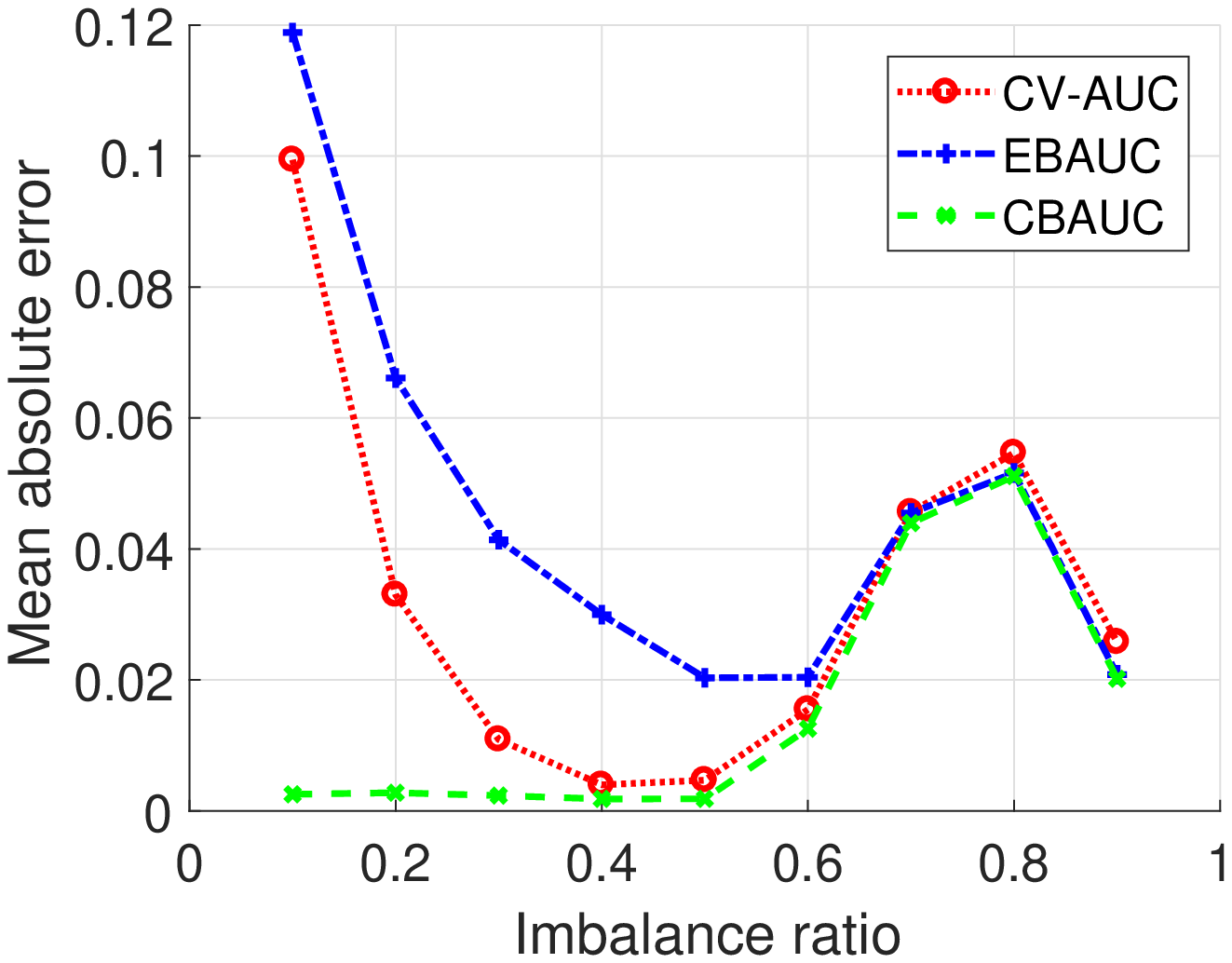}\label{fig:im_diff_synthetic_case_10D_N_100}
}
\caption{Accuracy assessment with imbalanced class in multivariate Gaussian data. Mean absolute error (MAE) of three AUC estimators to the true AUC with dimensionality of P = 2 ((a) and (d)), P = 4 ((b) and (e)), and P = 10 ((c) and (f)) and N = 10 (top row) and N = 100 (bottom row). The error estimators are 5-fold cross-validation AUC (CV-AUC) (red curve), empirical Bayesian AUC (EBAUC) (blue curve) and proposed closed-form Bayesian AUC (CBAUC) (green curve).}
\label{fig:imbalanced_case}
\end{center}
\end{figure*}

We assess the accuracy of our proposed estimator and compare it against two alternative estimators: the 5-fold cross-validated AUC estimator and the Empirical Bayesian AUC estimator. The former approach splits the training data into $K=5$ folds, and uses each fold in turn as a validation set. The resulting five AUC estimates are then averaged to produce the final estimate. The latter approach uses the approach proposed by~\citep{dalton2015ROC}, where the detection threshold slides in the ROC space, and the false and true positive rates are estimated using the Bayesian error estimation approach.

The reference AUC metric is different for the simulated and real-world cases. For simulated data, we know the parameters of the distribution where the sample is drawn and the true AUC (TAUC) and use that as the ground truth. We then compare the estimates given by the three estimators with the true AUC. \jussi{The true AUC is the true AUC conditioned on the classifier (see Eq. (2)). It depends on the training sample which determines the classifier parameters ~\citep{raudys1980dimensionality}. }

For real-world data, we split the data into training and testing---either at random (ovarian cancer dataset) or using a predefined train-test split, if such a commonly used split is defined (MEG dataset). We then use the hold-out test set AUC as the ground truth. 

In all cases, we are interested in the behavior of the estimators as the training sample size varies. With the simulated data, we can obviously generate training sets of any size, but with the real world cases, we create a training set by randomly subsampling  10\%, 15\%, 20\%, ..., 90\%, 95\%  of the original data. Finally, all experiments are iterated $1000$ times in order to average out any random effects.


The proposed method can estimate the accuracy of any linear classifier: linear discriminant, support vector machine or logistic regression. In the experiments, we use the $\L_2$ regularized logistic regression trained by {LIBLINEAR} toolbox~\citep{fan2008liblinear}.

\subsubsection{Multivariate Gaussian Data}
\label{subsec:mvdata_accuracy}

We construct a two-class multivariate Gaussian dataset by drawing $N = 2n$ samples where $n = 10, 15, \ldots, 100$; with $n$ samples from both classes.
Fig.~\ref{fig:synthetic_acc} (a)-(c) illustrates the mean absolute error (MAE) of the three AUC estimators with respect to the known population level ground truth calculated from the true distribution parameters. Table~\ref{tab:trueauc} \jussi{lists the true AUC calculated from the true distribution parameters, averaged over $1000$ classifiers based on samples drawn.}
\heikki{Additionally, the table also shows the AUC based on Binormal model based on population parameters~\citep{chakraborty2017observer,pepe2003statistical}, which is the AUC for the Bayes classifier \jussi{in the Gaussian case}. The Binormal-Bayes AUC is always the largest, because the classifier is based on population statistics instead of sample statistics.}

\begin{table}[!t]
\caption{\label{tab:trueauc}The true AUC values for the synthetic data experiment for each dimensionality ($P = 4, 10$, and 100) calculated from the true distributions of the parameters and averaged over the classifiers trained.}
\centering
\begin{tabular}{ l || c c c }
  \emph{Number of training samples} & $P = 4$ & $P = 10$ & $P = 100$ \\
  \hline\hline
  20 &   0.9528 & 0.9901 & 1.0000 \\
  50 &   0.9653 & 0.9947 & 1.0000 \\
  100 &  0.9688 & 0.9957  & 1.0000 \\
  150 &  0.9702 & 0.9961 & 1.0000 \\
  200 &  0.9707 & 0.9964  & 1.0000 \\
  Bayes & 0.9725 & 0.9973 & 1.0000 
\end{tabular}
\end{table}


The figure illustrates three cases (dimensionalities $P = 4, 10, 100$), with CBAUC shown in green, CV-AUC in red and EBAUC in blue. In these cases, the superiority of CBAUC can be clearly seen, in particular with small sample sizes. The high-dimensional case ($P = 100$) has a high true AUC ($\approx 1.0$). On the other hand, this makes the estimation task easier (CBAUC error is very small), but still reveals the variability of the CV-AUC estimator with small number of samples.

\sakira{We also extend the experiment further by studying problems with differing target AUCs and by including sample-binormal-AUC to the studied methods. We consider a two dimensional case with $\mub_\classzero = \mathbf{0}$, $\mub_\classone = \mathbf{1}$, and $\Sigmab_\classzero = \Sigmab_\classone = \sigma \mathbf{I}$, where $\sigma$ is varied to produce different target AUCs. The parameters (means and covariances) for the sample-binormal AUC are estimated from the sample data. Results are shown in Fig.~\ref{fig:target_auc}.} \jussi{The sample-binormal-AUC is a poor choice, especially when the target AUC is low because the estimates are positively biased.}

\sakira{We also address the variability of the errors with respect to training sample drawn, see~\citep{chakraborty2017observer}. In Fig.~\ref{fig:synthetic_acc} (d)-(f) and Fig.~\ref{fig:target_auc} (b), we show the variances of the errors with respect to the true AUC.  Results show that CBAUC is a good choice for a small number of samples, since it has lower variability than the other methods that is a desirable quality.}

\jussi{In order to study the sensitivity of the method to the assumptions (Gaussian class-conditional distributions with equal covariance matrices)}, we test the AUC estimators in a four-dimensional case with $\mub_\classzero = \mathbf{0}$, $\mub_\classone = [-1.5, -0.75, 0.75, 1.5]^T$, $\Sigmab_\classzero = \mathbf{I}$ and  $\Sigmab_\classone = \textrm{diag}(0.25, 0.75, 1.25, 1.75)$---as in ~\citep{xue2015does}. \jussi{This is intended to address the assumption that class covariances are equal of both classes.} 
The results are shown in Fig.~\ref{fig:synthetic_acc_with_diff_cov}. It is clear that also in this case the cross-validation performance is poorer than that of the Bayesian approach; in particular when the number of training samples is small. On the other hand, the empirical Bayesian AUC seems to be less sensitive to the violation of the assumptions than the closed-form Bayesian AUC. The reason for this is clear: the CBAUC is fully Bayesian and only views the data through their statistical properties (means and covariances), while the empirical AUC uses the actual sample values while iterating the threshold (and is thus "less Bayesian"). \jussi{Given the Gaussianity assumption, the assumption of equal class covariances can be justified as the linear classifier is the Bayes classifier only if the class covariances are equal \cite{Duda01}. }
\vskip -0.2in
We also study the effect of class imbalance on the performance of the three AUC estimators using the simulated dataset. Fig.~\ref{fig:imbalanced_case} summarizes the performance of the estimators with respect to different ratio of class imbalances in training data. The imbalance ratio is the number of observations in minority class to the total number of observations.  We see a detrimental effect on the estimators with the increasing extent of class imbalance in training data, particularly, in $P \gg N$ setting. The performance of the CV-AUC and EBAUC deteriorates with both the highest and the lowest imbalanced ratio. On the other hand, CBAUC remains stable, particularly if we have enough data to represent the features in high dimension.




\subsubsection{Ovarian Cancer Data}
\label{subsec:ocdata_accuracy}

Next, we study the accuracies of the three AUC estimators with the ovarian cancer dataset as a function of the training set size. Results are summarized in Fig.~\ref{fig:ovarian_acc} and show that CBAUC has the least MAE to the test AUC (AUC computed from the hold-out test partition). The EBAUC estimator converges to CBAUC as the number of training samples increases (Fig.~\ref{fig:ovarian_acc} blue curve). On the other hand, CV-AUC requires more training samples ($> 100$), and still does not reach similar estimation accuracy as the Bayesian approaches. Since the dimension of the data ($P = 4000$) is quite large compared to the number of samples ($N = 20, 40, ..., 100$), CV-AUC is clearly a poor choice for evaluating a classifier in this $P \gg N$ case.

\begin{figure}[htb]
\begin{center}
\includegraphics[width=4cm]{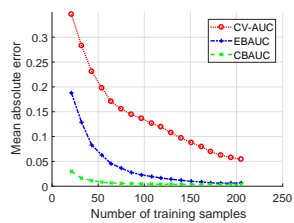} 
\caption{Accuracy assessment using ovarian cancer dataset. Mean absolute error (MAE) of three AUC estimators to the test AUC. The error estimators are 5-fold cross-validation AUC (CV-AUC) (red curve), empirical Bayesian AUC (EBAUC) (blue curve) and proposed closed-form Bayesian AUC (CBAUC) (green curve). }
\label{fig:ovarian_acc}
\end{center}
\end{figure}

\subsubsection{ICANN 2011 MEG Dataset}
\label{subsec:megbindata_accuracy}

Finally, we study the accuracy in a case with moderate dimensionality and number of samples ($N\approx P$): the ICANN 2011 MEG dataset. Unlike the ovarian cancer experiment, this dataset has a predefined train/test split, and we can compare the AUC estimates with those calculated from the test set; simulating the original competition situation, where the teams had to assess how well different strategies work on the hidden test samples. The results of this experiment averaged over 1000 iterations for each $N$ are illustrated in Fig.~\ref{fig:acc_diff_megbin_case}. 

The results show that in this case, the CV-AUC estimator outperforms the Bayesian estimators for most sample sizes. Moreover, the proposed Bayesian approach has the worst performance among the three estimators. Also, the CV-AUC estimator accuracy has an obscure shape suggesting that the estimator would be the most accurate with about 150 training samples but degrade in accuracy as more samples are added. In fact, this is a consequence of the discrepancy between the train/test subsets: The two sets were in fact measured from the same subject but in different days. As a result, the measurement setting or the mental state of the subject was not exactly the same, and the characteristics of the two measurements are different. Moreover, the organizers intentionally added a small amount of second day (test day) samples to the training set (first day). Since the CV-AUC is based purely on looking at the data, it can exploit these samples, while they are only harmful for the Bayesian estimators as the training set becomes a mixture of two Gaussians violating the assumptions of the model.

\begin{figure}[htb]
\begin{center}
\subfigure[]{\includegraphics[width=4cm]{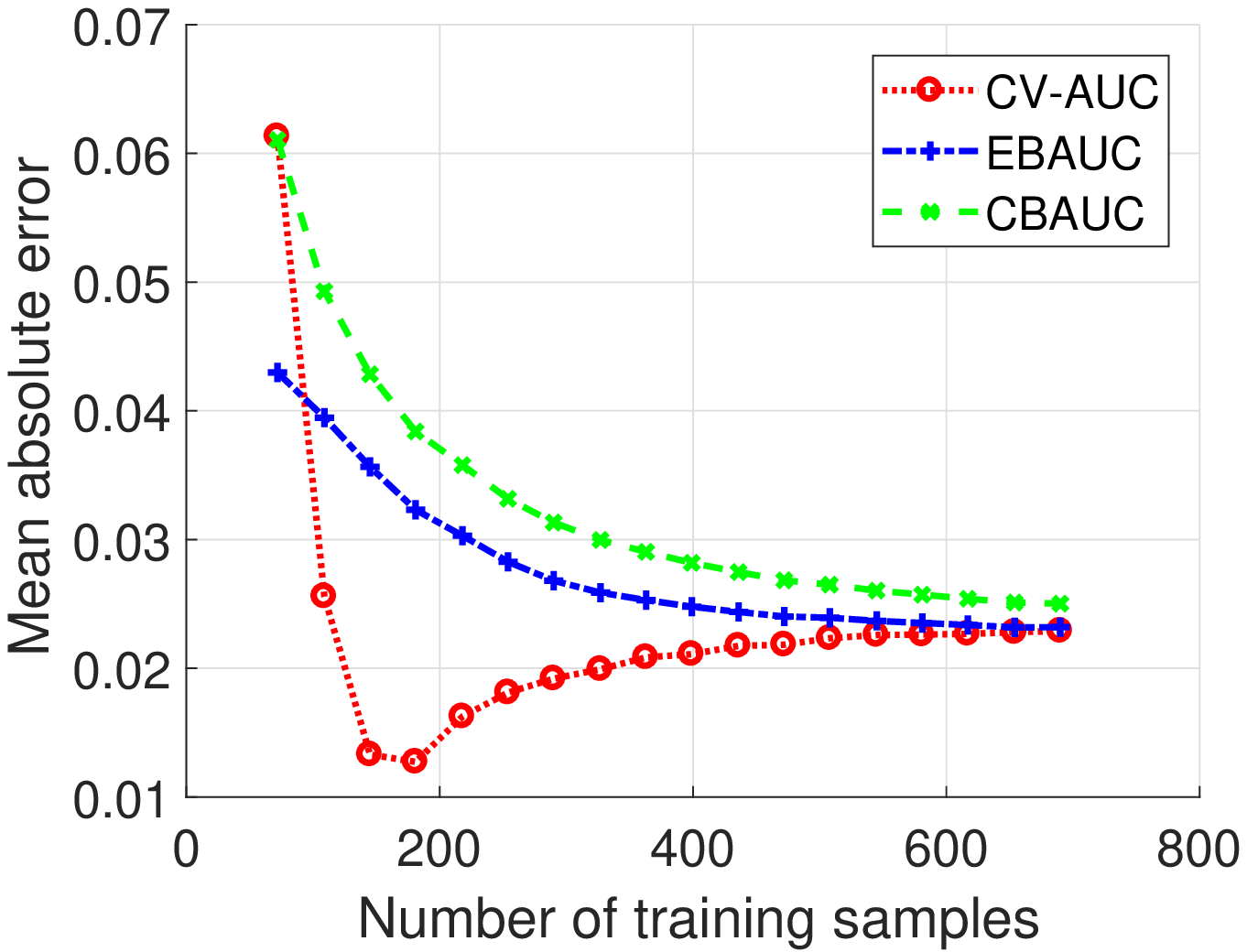} \label{fig:acc_diff_megbin_case}}
\subfigure[]{\includegraphics[width=4cm]{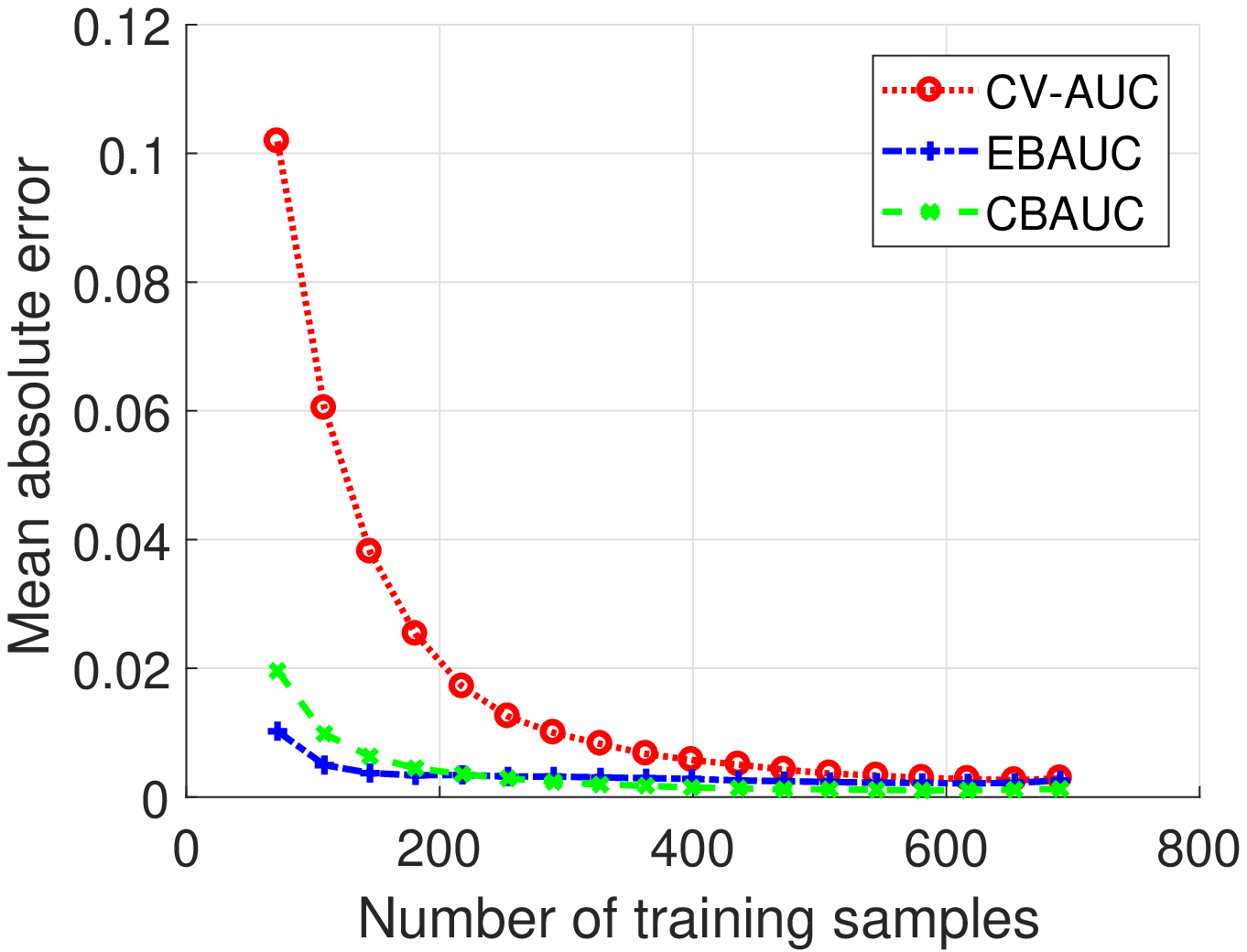}\label{fig:acc_diff_megbin_tr_case}}
\caption{Accuracy assessment using ICANN 2011 MEG binary dataset. Mean absolute error (MAE) of three AUC estimators to the test AUC (a) using the predefined train/test split and (b) using hold-out test data within the training set. The error estimators are 5-fold cross-validation AUC (CV-AUC) (red curve), empirical Bayesian AUC (EBAUC) (blue curve) and proposed closed-form Bayesian AUC (CBAUC) (green curve). }
\label{fig:mebin_test_acc}
\end{center}
\end{figure}

\vskip -0.2in
\section {Conclusions}
\label{sec:conclusions}

In this paper, we introduced a new classifier accuracy metric, CBAUC, which estimates the AUC of a linear classifier. The estimator has a closed-form solution that can be quickly evaluated using only the training data. This speeds up the evaluation by avoiding computationally expensive cross-validation splits, but also improves the accuracy and stability as long as the model assumptions are valid. 

We also studied the accuracy of the proposed estimator using simulated and real-world data. Results showed that the accuracy of estimator is superior with small number of training samples provided that prior assumptions of the underlying distribution hold. We recognized that conventional counting based approaches---such as cross-validation, bootstrapping, etc.--- 	will outperform the proposed method if the model assumptions fail, or if there is discrepancy between the training and test partitions. As an example of the former case, we experimented with data from non-Gaussian distributions, and the Bayesian estimator indeed failed to accurately estimate the AUC.


If we look more thoroughly into the error of the proposed estimator, we can see that the Bayesian approach tends to have a higher bias but smaller variance than the counting based approaches. In particular, the proposed approach tends to over-estimate the AUC, but its small variance more than compensates for the bias and leads to an accurate AUC estimate. More importantly, the bias is usually less important than the variance in the most important use case of model selection. In this case, the primary goal is to select the most accurate model and the accurate measurement of the performance is only secondary. In the future, it would be of interest to further explore the proposed estimator for imbalanced distribution of prior class probabilities.
\section*{Acknowledgments}

This work was partially funded by the Academy of Finland
\jussi{projects 309903 CoefNet and 316258 PredBrain}. Authors also thank CSC--IT Center for Science for computational resources.






\bibliographystyle{elsarticle-num}
\bibliography{hehu}



\end{document}